\documentclass[a4paper,11pt]{article}
\usepackage{fullpage} % can remove later if want document to be longer

\usepackage{diagbox} %new
\usepackage{slashbox} %new
\usepackage{tabularx} %new
\usepackage{bbm} %new
\usepackage{pifont}  %new
\usepackage{booktabs} %new

\usepackage{amsmath}
\usepackage{amssymb}
\usepackage{amsthm}
\usepackage{mathtools}
\usepackage{mathdots}
\usepackage{appendix}
\usepackage{graphicx}
\usepackage{enumerate}
\usepackage{microtype}
\usepackage{mleftright}
\usepackage{mdframed}
\usepackage{authblk}
\usepackage{tcolorbox}
\usepackage[english]{babel}
\usepackage[utf8]{inputenc}
\usepackage{algorithm}
\usepackage[noend]{algpseudocode}
\usepackage{xspace}
\usepackage{tikz}
\usepackage{pgfplots}
\pgfplotsset{width=8cm,compat=1.9}
\usepackage{mathdots}
\usepackage{url}
\usepackage{forest}
\forestset{default preamble={for tree={s sep+=1cm}}}
\usepackage{caption,subcaption}
\usepackage[noadjust]{cite}
\usepackage{hyperref}

\tcbset{
    rounded corners,
    colback = white,
    before skip = 0.2cm,
    after skip = 0.5cm,
    boxrule = 1.5pt,
    arc = 5pt
}

\makeatletter
\renewcommand{\ALG@name}{Protocol}
\makeatother
\allowdisplaybreaks

\graphicspath{ {./figures/} }

\usepackage{algorithm}
\usepackage[noend]{algpseudocode}

\DeclarePairedDelimiter{\ceil}{\lceil}{\rceil}

\newtheorem{theorem}{Theorem}[section]
\newtheorem*{theorem*}{Theorem}

\newtheorem{lemma}[theorem]{Lemma}

\newtheorem*{corollary*}{Corollary}

\newtheorem{observation}[theorem]{Observation}

\theoremstyle{definition}
\newtheorem{definition}[theorem]{Definition}

\newcommand{\cH}{\mathcal{H}}
\newcommand{\cX}{\mathcal{X}}

\newcommand{\cS}{\mathcal{S}}
\newcommand{\cF}{\mathcal{F}}
\newcommand{\cE}{\mathcal{E}}

\newcommand{\cV}{\mathcal{V}}
\newcommand{\cP}{\mathcal{P}}

\newcommand{\E}{\mathbb{E}}

\newcommand{\VC}{\operatorname{VC}}

\newcommand{\poly}{\operatorname{poly}}

\newcommand{\Lrn}{\mathsf{Lrn}}
\newcommand{\Adv}{\mathsf{Adv}}

\newcommand{\SOA}{\mathsf{SOA}}

\newcommand{\pac}{\mathsf{PAC}}

\newcommand{\eps}{\varepsilon}

\newcommand{\Hedge}{\mathsf{Hedge}}
\newcommand{\ClassHedge}{\mathsf{ClassHedge}}
\newcommand{\SplitVersionSpace}{\mathsf{SplitLeaf}}

\newcommand{\prop}{\mathsf{prop}}

\newcommand{\LD}{\mathtt{L}}

\newcommand{\M}{\mathtt{M}}
\newcommand{\tree}{\mathbf{T}}

\DeclareMathOperator*{\argmax}{arg\,max}
\newcommand{\idan}[1]{\textcolor{blue}{Idan: #1}}
\newif\ifanonymous
\anonymousfalse

\begin{document}
%\setcounter{Maxaffil}{3}
%\title{On the Price of Various Limitations in Multiclass Online Classification}
%\title{Bandit Feedback, Adaptive Adversaries, and Randomized Predictions}
\title{Optimal Randomized Proper Online Learning}
%\title{Tight Mistake Bounds for Randomized Proper Online Learning}

\author[1]{Zachary Chase}
\author[2]{Idan Mehalel}
%\author[2]{Idan Mehalel}
\affil[1]{Rutgers University}
\affil[2]{The Hebrew University}

\maketitle

\begin{abstract}
We prove that the optimal expected mistake bound of online learning a function class $\cH$ by a randomized proper learning algorithm is $O(\LD(\cH) \log T)$, where $\LD(\cH)$ is the Littlestone dimension of $\cH$ and $T$ is the time horizon. Our result improves upon the previously best known bound of $O(\LD(\cH) \log^6 T)$ given by \cite{daskalakis2022fast} (STOC 2022), and is optimal up to a universal constant for worst-case classes.

%We prove that the optimal expected mistake bound of online learning a function class $\cH$ by a randomized proper learning algorithm is $O(\LD(\cH) \log T)$, where $\LD(\cH)$ is the Littlestone dimension of $\cH$ and $T$ is the time horizon. This improves upon the previously best known bound of $O(\LD(\cH) \log^6 T)$ given by \cite{daskalakis2022fast} (STOC 2022). We further establish that our upper bound cannot be improved in general by more than a constant factor: for any $d \in \mathbb{N}_+$, there exists a class $\cH$ such that $\LD(\cH) = d$, and yet for any (large enough) $T$ the expected mistake bound of any randomized proper online learner for $\cH$ is $\Omega(d \log T)$. This completely resolves an open question of \cite{daskalakis2022fast}.
%Our results reveal that online and PAC learning share essentially the same price of proper learning: in the optimal error rate of both fundamental settings, an extra logarithmic factor of the sample size is always sufficient and sometimes necessary. At a technical level, the main tool utilized by our proof is a low-dimensional polynomial-size cover for a specific type of Littlestone classes that are hard to learn properly. This tool may be of independent interest.
\end{abstract}

%\pagebreak
%\setcounter{tocdepth}{2}
%\tableofcontents
%\pagebreak

\section{Introduction}
The adversarial online learning model dates back to the fundamental work of Littlestone \cite{littlestone1988learning}. It is defined as a game played between a learner $\Lrn$ and an adversary $\Adv$ for $T$ many rounds, in the presence of a concept class $\cH$ of predictors $h: \cX \to \{0,1\}$, where the domain $\cX$ is any set. At each round $t \in [T]$, the learner sends a classifier $f_t : \cX \to \{0,1\}$ to  $\Adv$, and $\Adv$ responds with a labeled example $(x_t,y_t) \in \cX \times \{0,1\}$. The learner suffers the 0/1 loss $1[f_t(x_t) \neq y_t]$. In the realizable setting considered in this paper, the adversary is restricted to output examples that are consistent with some \emph{target} $h^\star \in \cH$. That is, there must be $h^\star \in \cH$ such that $h^\star(x_t) = y_t$ for all $t \in [T]$. We denote the minimax optimal \emph{total loss} (or \emph{mistake bound}) of the learner in this setting by $\M(\cH,T)$. If there exists $M < \infty$ such that $\M(\cH,T) \leq M$ for all  $T$, we say that $\M(\cH,T)$ is \emph{learnable} with optimal mistake bound at most $M$.

The standard online learning model described above allows \emph{improper learning}: the learner can use an arbitrary classifier $f_t : \cX \to \{0,1\}$, not necessarily one belonging to $\cH$. In this work, we study the more restrictive requirement of \emph{proper learning}, under which every classifier used by the learner must belong to the given class $\cH$. Somewhat counter-intuitively, even though the input sequence is realized by some $h^\star \in \cH$, deterministic optimal proper learners can demonstrate total loss of $T$, even for extremely simple classes such as the class of singletons (see \cite{hanneke2021online} for example). Therefore, the standard proper learning model allows randomization: in round $t$, instead of choosing a classifier $h_t \in \cH$, the learner chooses a distribution $P_t$ over $\cH$, and the suffered loss is the probability that $h_t$ drawn from $P_t$ misclassifies  $x_t$. We denote the minimax optimal total loss (or \emph{expected mistake bound}) of a randomized proper learner by $\M_{\prop}(\cH,T)$.

In contrast with the improper learning model, where it is known that an optimal deterministic learner is worse than any randomized learner by at most a multiplicative factor of $2$, the proper learning scenario reveals a completely different picture: it is well known that for any learnable class $\cH$ we have $\M_{\prop}(\cH,T) = o(T)$ \cite{hanneke2021online, daskalakis2022fast}. However, prior to this work, there were no tight bounds on $\M_{\prop}(\cH,T)$ for general learnable classes. Our work shows that $\M_{\prop}(\cH,T) = O_{\cH} (\log T)$ for any learnable class $\cH$, and that there exist learnable classes for which the upper bound is tight.

The fundamental combinatorial parameter governing $\M(\cH,T)$ is the \emph{Littlestone dimension} of $\cH$, denoted by $\LD(\cH)$ (see formal definition and all relevant background in Section~\ref{sec:prem}). The classic work of Littlestone \cite{littlestone1988learning} gives an exact characterization of $\M(\cH,T)$: it proves that $\M(\cH,T) = \min\{\LD(\cH), T\}$ for all $\cH, T$. The online learner achieving this minimax optimal loss bound, termed \emph{SOA} (Standard Optimal Algorithm) is generally improper. This characterization of the improper minimax total loss via $\LD(\cH)$ provides a clean and appealing characterization, however, there are settings in which an improper prediction is not a valid output. The class $\cH$ may encode a prescribed model family, a collection of feasible decisions, or the available actions of a player in a repeated game. In these cases, a classifier outside $\cH$ may have no meaningful implementation. Proper learning also preserves the semantic, structural, or interpretability constraints that motivated the choice of $\cH$. It is therefore natural to ask how much predictive performance must be sacrificed when the learner is required to use only hypotheses from the class being learned. See, for example, \cite{braverman2026learning} for more discussion on the importance of proper learning.

\subsection{Results}

We prove the following theorem.

\begin{tcolorbox}
\begin{center}
\begin{theorem} \label{thm:main}
    For any domain $\cX$, for any class $\cH \subset \{0,1\}^{\cX}$ and for any time horizon $T \geq 2$:
    \[
    \M_{\prop}(\cH, T) = O(\LD(\cH) \log T).
    \]
    Furthermore, the upper bound is tight in the sense that for every $d \in \mathbb{N}_+$ there exists a class $\cH$ such that $\LD(\cH) = O(d)$, and yet for any $T \geq d^2$ it holds that
    \[
    \M_{\prop}(\cH, T) = \Omega(\LD(\cH) \log T).
    \]
\end{theorem}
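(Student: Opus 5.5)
The upper bound and the lower bound are proved separately, and I expect the upper bound to be the harder part.

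\textbf{Upper bound.} The plan is a proper learner that runs an improper-to-proper reduction on top of a small cover of $\cH$. Let $d=\LD(\cH)$. The first step builds a finite family $\cF\subseteq\cH$ with the following property: for every realizable sequence of length $T$, some $f\in\cF$ is consistent with it, except possibly on a few rounds that we can charge for. Ideally $\log|\cF| = O(d\log T)$.

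The natural construction follows the mistake trees of the SOA. For each set $S\subseteq[T]$ of at most $d$ rounds, we simulate the SOA and flip its prediction exactly on the rounds in $S$. Since SOA makes at most $d$ mistakes, some such $S$ reproduces the true labels, so the number of experts is $\sum_{i\le d}\binom{T}{i}\le T^{d}$. These experts are improper, however, so they cannot be used directly.

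To obtain proper hypotheses, the version space at each step must be represented by an element of $\cH$. The experts are therefore kept at the level of version spaces $V_S$. At each round the learner runs $\Hedge$ (or a multiplicative-weights rule in the realizable setting, with halting of inconsistent experts) over the $T^d$ experts. It then samples an expert $S$ and outputs some hypothesis $h\in V_S$ drawn from a suitable distribution.

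\textbf{Key step for the upper bound.} Here lies the main obstacle: an arbitrary $h\in V_S$ may disagree with the SOA prediction on $x_t$. I would handle this with a potential argument in the style of the $\SplitVersionSpace$ operation. The key claim is that whenever a random $h\in V_S$ errs with probability $p$, the expert's version space shrinks in a controlled way, measured by a refinement of Littlestone dimension, or we may split the expert into two children. If the dimension-based potential increases the expert count by at most a factor $T$ per unit drop of dimension, the total number of experts stays $T^{O(d)}$.

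A realizable exponential-weights analysis then gives expected loss at most $O(\log(\#\text{experts}))$. This is because each unit of expected loss removes a constant fraction of the weight from consistent experts. The resulting bound is $O(d\log T)$. Making the splitting bookkeeping yield $T^{O(d)}$ rather than $T^{O(d\log T)}$ is where I expect the real difficulty.

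\textbf{Lower bound.} For the lower bound I would take a disjoint union of $d$ copies of a class $\cH_0$ with $\LD(\cH_0)=O(1)$ and $\M_{\prop}(\cH_0,T)=\Omega(\log T)$. The product structure gives additivity, since the adversary can play the copies on disjoint blocks of $T/d$ rounds. The total loss is then $d\cdot\Omega(\log(T/d))=\Omega(d\log T)$ once $T\ge d^2$.

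For $\cH_0$ I would use a singletons-type or threshold-type class on $T$ points. The adversary reveals points in an order that forces any fixed distribution to place mass at most $1-1/k$ on the correct labeling at stage $k$. This yields harmonic-sum loss $\sum_k 1/k=\Omega(\log T)$, and Yao's principle extends the argument to randomized learners.
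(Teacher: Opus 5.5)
There are genuine gaps in both halves.

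\textbf{Upper bound.} You stop exactly at the step that carries the proof. You correctly see that a hypothesis drawn from $V_S$ may disagree with $\SOA_{V_S}$ at $x_t$. The remedy you sketch (a refined-dimension potential, splitting an expert ``into two children'' when a random $h$ errs) is not specified, and it would not work as described. The distribution must be committed before $x_t$ is revealed, so splitting along $x_t$ is just the ordinary version-space update. Moreover, on rounds with $y_t=\SOA_V(x_t)$ no dimension is lost, yet a distribution over $V$ may err there with probability far above $1/T$, and this can repeat.

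The missing idea is a dichotomy that is uniform over $x\in\cX$:
\begin{enumerate}
\item Either some $P\in\Delta(V)$ satisfies $\Pr_{h\sim P}[h(x)\neq\SOA_V(x)]\le 2/T$ for all $x$. The expert then plays $P$, paying at most $2/T$ on rounds with $y_t=\SOA_V(x_t)$ and $1$ on at most $\LD(V)$ other rounds, each of which lowers the dimension.
\item Or no such $P$ exists. Then the minimax theorem for Littlestone classes, applied to $\{h\oplus\SOA_V : h\in V\}$ (which has the same Littlestone dimension as $V$), gives $Q\in\Delta(\cX)$ with $\Pr_{x\sim Q}[h(x)\neq\SOA_V(x)]>1/T$ for every $h\in V$. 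A $(1/T)$-net for $Q$ yields a set $N$ of $\poly(T)$ points with $V\subseteq\bigcup_{x\in N}\{h\in V: h(x)\neq\SOA_V(x)\}$. Each piece has Littlestone dimension $<\LD(V)$ by the standard SOA argument.
\end{enumerate}
This is exactly the ``$\poly(T)$ children per unit drop of dimension'' you were hoping for. It gives $T^{3d}$ experts with no enumeration of SOA flip sets. Then plain $\Hedge$, with a best expert of surrogate loss at most $d+2$, gives $O(d\log T)$. A realizable halving-style analysis does not apply here, since no expert has zero loss.

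\textbf{Lower bound.} Your reduction from dimension $d$ to dimension $O(1)$ is sound, conditional on a correct base class, and is a genuine alternative to the paper's $d$-Hamming spheres. You take a product of $d$ copies on disjoint domains and play them on consecutive blocks of $T/d$ rounds. During block $i$, the marginal of $P_t$ on copy $i$ is a proper learner for the base class given the fixed earlier history. Littlestone dimension is additive over such products, and $\log(T/d)\ge\frac12\log T$ for $T\ge d^2$.

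The base case, however, fails as written:
\begin{enumerate}
\item Thresholds on $T$ points have Littlestone dimension about $\log_2 T$, not $O(1)$.
\item For singletons on $T$ points, the learner can keep playing the uniform distribution over all $T$ singletons, including already eliminated ones; properness only requires $h\in\cH$, not $h$ in the version space. It then loses $1/T$ on every $0$-labeled example and at most $1$ on the unique $1$-labeled one, so $\M_{\prop}=O(1)$. Your harmonic sum only bounds learners that concentrate on the current version space, and Yao's principle does not address this.
\end{enumerate}
The missing idea is the trade-off that fixes the block size at $n=T/\log T$ points:
\begin{enumerate}
\item A learner keeping constant mass on eliminated singletons loses $\Omega(1/n)$ in each of $\Omega(T)$ rounds, i.e.\ $\Omega(\log T)$.
\item A learner keeping mass on the version space is charged the harmonic sum $\Omega(\log n)=\Omega(\log T)$.
\end{enumerate}
These are the paper's cases 2a and 2b. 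Finally, the theorem asks for one class that works for all $T\ge d^2$, so the base class cannot live ``on $T$ points.'' You need to glue blocks of every size; the paper adds indicator points $I_n$, at the cost of one extra dimension. You must also handle a learner that puts mass on other blocks, which is the paper's case 1, where the example $(I_n,1)$ costs at least $1/2$.
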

\end{center}
\end{tcolorbox}

The best known upper bound prior to our work was $O(\LD(\cH) \log^6 T)$, given in \cite{daskalakis2022fast}. Their work considered a more general regression setting, and derived the upper bound for classification as a special case. The previously best known lower bound was the standard lower bound of $\LD(\cH)$ (for large enough $T$) for improper learning given in \cite{littlestone1988learning}.  See Table~\ref{tab:bounds} for a summary of previous and current bounds. Determining whether a polylog$(T)$ factor is necessary or not in $\M_{\prop}(\cH, T)$ was explicitly raised as an open question in \cite{daskalakis2022fast}. Theorem~\ref{thm:main} shows that a $\log T$ factor is indeed sometimes necessary, and is always sufficient. Note that while the upper bound holds for all classes, the lower bound holds only for some classes. This is not an artifact of our lower bound's proof: it is not hard to find classes for which the optimal proper mistake bound is $\LD(\cH)$, even using a deterministic proper learner.

We prove Theorem~\ref{thm:main} in Sections~\ref{sec:upper-bound}-\ref{sec:lower-bound}: Section~\ref{sec:upper-bound} contains the proof of the upper bound, and Section~\ref{sec:lower-bound} contains the proof of the lower bound.

\begin{table}[t]
  \centering
  \begin{tabular}{lll}
    \toprule
    Bound & Comments & References \\
    \midrule
    $\tilde{O} \sqrt{\LD(\cH)T}$ 
        & Agnostic regret bound 
        & \cite{rakhlin2015online,hanneke2021online} \\
    $O(\LD(\cH)\log^{6}T)$ 
        & 
        & \cite{daskalakis2022fast} \\
    $O(\LD(\cH)\log T)$ 
        & 
        & \textbf{[this work]} \\
    $\geq \LD(\cH)$ 
        & 
        & \cite{littlestone1988learning} \\
    $\Omega(\LD(\cH)\log T)$ 
        & $\forall \LD(\cH)$, $\exists \cH$
        & \textbf{[this work]} \\
    \bottomrule
  \end{tabular}
  \caption{Summary of previous and current bounds for proper online learning.
  The lower bounds hold for any large enough $T$. The comment in the last row means that for any $d \in 
  \mathbb{N}$ there exists a class $\cH$ with $\LD(\cH) = O(d)$ for which the lower bound holds. That is, the lower bound does not hold for all $\cH$. This is not an artifact of the lower bound's proof: it is not hard to find classes for which the optimal proper mistake bound is $\LD(\cH)$, even using a deterministic proper learner.}
  \label{tab:bounds}
\end{table}

\subsection{Comparison with the PAC setting}
The price of proper learning (compared to improper learning) was also studied in the classic statistical PAC learning framework, and our results have close analogues to results proved in the PAC setting. Let us briefly recall the PAC learning scenario. Given a sample size $T$, an adversary fixes a secret distribution over the domain $\cX$ and an unknown target concept $h^\star \in \cH$. The learner receives an iid labeled sample
\[
S:= (x_1, h^\star(x_1)), \ldots, (x_T,h^\star(x_T)), x_1, \ldots, x_T \sim D
\]
and in response outputs a classifier $f : \cX \to \{0,1\}$. Since in the online setting the learner is evaluated against all $T$ instances in the input sequence and we wish to compare the price of proper learning in both settings, it is convenient to measure the loss of $f$ as its expected error over $T$ many freshly drawn instances drawn iid from $D$ (instead of just one test instance, as usually done in the PAC learning setting). We denote the minimax optimal total loss in this setting as $\M^{\pac}(\cH,T)$. We also define the minimax optimal loss in the same setting when the learner is required to be proper, that is, to always output $f\in \cH$, as $\M^{\pac}_{\prop}(\cH,T)$.

The work of \cite{hanneke2016optimal} established that for any class $\cH$ and any large enough sample size $T$, $\M^{\pac}(\cH,T) = \Theta(\VC(\cH))$, where $\VC(\cH)$ denotes the VC-dimension of $\cH$, which is the PAC-analogue of the Littlestone dimension. Similarly, \cite{littlestone1988learning} showed for the online setting that $\M(\cH,T) = \LD(\cH)$ (for $T \geq \LD(\cH)$). For proper PAC learning, the classic bounds for ERM \cite{vapnik1971uniform} show that $\M^{\pac}_{\prop}(\cH,T)  = O(\VC(\cH) \log T)$. \cite{daniely2014optimal} showed that there are classes for which any optimal PAC learner must be improper, but this result is proved for the multiclass setting, where the label set size is larger than two. The work \cite{bousquet2020proper} established an analog result for standard binary classification, and also gave a quantitative lower bound of $\M^{\pac}_{\prop}(\cH,T)  = \Omega(\VC(\cH) \log T)$ for some classes and large enough $T$, matching the analysis for ERMs.

Our work completes the analog picture for online learning: While for all classes $\M(\cH,T) = \LD(\cH)$, Theorem~\ref{thm:main} establishes that $\M_{\prop}(\cH,T) = O(\LD(\cH) \log T)$ for all classes, and that this is tight for some classes.

\section{Proof sketches} \label{sec:sketch}
In this section, we sketch the main ideas of the proofs of the upper and lower bounds of Theorem~\ref{thm:main}. This section assumes some familiarity with standard concepts in learning theory and online learning theory. The reader may consider to first skim through Section~\ref{sec:prem}, where we define and state basic concepts and results, and Section~\ref{sec:upper-bound-prem}, where we present some more advanced tools used to prove the upper bound.

\subsection{Upper bound}

Our starting point is that our benchmark algorithm is the SOA: we cannot hope that a proper learner will do much better than SOA.\footnote{Even though in the proper setting we allow the learner to randomize its predictions, it is a well-known fact that randomization does not help by more than a constant factor in the improper setting.}
Therefore, it is natural to try approximating the SOA algorithm, and this is the most basic idea of the proof.
For a class $V \subset \{0,1\}^{\cX}$, we denote the classifier output by the SOA algorithm for $V$ as $\SOA_V$, which is defined as
\[
\forall x\in \cX: \SOA_V(x) = \argmax_{r \in \{0,1\}} \LD(V_{x \to r}). 
\]
In this spirit, a distribution $P$ over a set of functions $V$ is called $\epsilon$-\emph{good} if for all $x \in \cX$: 
\[
\Pr_{h \sim P}[h(x) \neq \SOA_V(x)] \leq \epsilon.
\]

For a small enough $\epsilon$, if an $\epsilon$-good distribution over the version space exists, it is intuitive that a proper learner may use it to predict and perform well. Here, we do not get into the details of how small exactly should $\epsilon$ be for this intuition to hold, and simply assume that it is small enough. The challenging part of the proof lies in the case where an $\epsilon$-good distribution does not exist.
The heart of the proof lies in the following lemma that handles this situation, and might be of interest on its own right.

\begin{tcolorbox}
\begin{center}
\begin{lemma}[Small low-dimensional cover] \label{lem:covering-informal}
    Let $V \subset \{0,1\}^{\cX}$ and $\epsilon \in (0,1)$. If there is no $\epsilon$-good distribution over $V$, then there exists a natural number $k = \poly(\LD(V)/\epsilon)$ and a family of subsets of $V$,  $\cV:= \{V_1, \ldots, V_k\}  \subset \cP(V)$ such that:
    \begin{enumerate}
        \item $\cV$ covers $V$: 
        \[
        V \subset \bigcup_{i =1}^k V_k.
        \]
        \item For all $i \in [k]$: $\LD(V_i) < \LD(V)$.
    \end{enumerate}
\end{lemma}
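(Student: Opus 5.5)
The plan is to turn the nonexistence of an $\epsilon$-good distribution into a "hard" distribution over points via a minimax argument. Then I take an $\epsilon$-net of that distribution and define the cover elements as the "non-SOA halves" of $V$ at the net points. Write $d=\LD(V)$, and assume $1\le d<\infty$; otherwise the statement is trivial or vacuous. For $h\in V$ let $D_h=\{x\in\cX : h(x)\neq \SOA_V(x)\}$.

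\textbf{Step 1 (minimax).} Consider the zero-sum game in which the learner picks $h\in V$, the adversary picks $x\in\cX$, and the payoff is $1[h(x)\neq \SOA_V(x)]$. By hypothesis, for every distribution $P$ over $V$ there is $x$ with $\Pr_{h\sim P}[x\in D_h]>\epsilon$. I want to invoke the minimax theorem to obtain a distribution $Q$ over $\cX$ such that
\[
\Pr_{x\sim Q}[x\in D_h]\geq \epsilon' \quad \text{for all } h\in V,
\]
with $\epsilon'$ slightly smaller than $\epsilon$, say $\epsilon/2$. The weakening is harmless because $k$ only needs to be polynomial.

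\textbf{Step 2 ($\epsilon$-net).} The family $\{D_h : h\in V\}$ is obtained from $V$ by XOR-ing with the fixed function $\SOA_V$. XOR-ing with a fixed function preserves shattering, so its VC dimension is at most $\VC(V)\le \LD(V)=d$. By the $\epsilon$-net theorem, there exists a finite set $\{x_1,\dots,x_k\}$ with $k=O\big((d/\epsilon')\log(1/\epsilon')\big)=\poly(d/\epsilon)$ that hits every $D_h$ of $Q$-mass at least $\epsilon'$. A random sample from $Q$ of that size works with positive probability. Hence every $h\in V$ disagrees with $\SOA_V$ on some $x_i$.

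\textbf{Step 3 (the cover).} Set $V_i=V_{x_i\to 1-\SOA_V(x_i)}$ for $i\in[k]$. By Step 2, $V\subset\bigcup_i V_i$. Moreover $\LD(V_i)<d$. Indeed, if both $V_{x_i\to 0}$ and $V_{x_i\to 1}$ had Littlestone dimension $d$, then rooting a tree at $x_i$ would give $\LD(V)\ge d+1$, a contradiction. Since $\SOA_V(x_i)$ is the argmax side, the other side $V_i$ must have dimension strictly less than $d$.

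\textbf{Main obstacle.} The main obstacle is Step 1 when $\cX$ and $V$ are infinite, since the von Neumann minimax theorem does not apply directly. I would first try to reduce to finite games.

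Suppose the conclusion of Step 3 failed. Then for every finite $S\subset\cX$ of size $k$ there is an $h_S\in V$ with $h_S=\SOA_V$ on $S$. I would aim for a finite set $X'\subset\cX$ and finite $V'\subset V$ on which the restricted game still has value above $\epsilon/2$. On such a finite game, von Neumann's theorem together with the $\epsilon$-net argument (applied with $\SOA_V$ fixed, not recomputed) yields the cover.

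To find $X'$ and $V'$, I would argue by contradiction. If every finite subgame had value at most $\epsilon/2$, I would take good mixed strategies on increasing finite sets and pass to a limit. This uses compactness of $\{0,1\}^{\cX}$ in the product topology and the fact that $\LD$ is determined by finite trees, so it does not increase under taking pointwise limits. The limit would be an $\epsilon$-good distribution, possibly supported on the closure of $V$. One then has to check that this can be pulled back to $V$ itself; the finite restrictions approximate well enough. Alternatively, I would replace minimax by a Sion-type theorem on the compact space of distributions over the closure of $V$.

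Getting this limiting argument right, so that the resulting distribution really lives on $V$ and the parameter loss is only a constant factor in $\epsilon$, is the step I expect to require the most care. Steps 2–3 are routine.
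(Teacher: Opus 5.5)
Your Steps 2 and 3 match the paper's argument (Lemma~\ref{lem:small-net} and Lemma~\ref{lem:cover-dim-drops}). You pass to the XOR class $B_V=\{h\oplus \SOA_V : h\in V\}$, take an $\epsilon$-net for a hard distribution $Q$ over $\cX$, cover $V$ by the non-SOA halves $V_{x\to 1-\SOA_V(x)}$, and get the dimension drop from $\SOA_V(x)$ being the argmax side. Even your loss of a factor $2$ in $\epsilon$ mirrors the paper's passage from $2/T$ to $1/T$.

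The gap is Step 1 when $V$ and $\cX$ are infinite, which you flag but do not close. The paper does not prove this step from scratch. It invokes the minimax theorem of Hanneke, Livni and Moran for $0/1$ games of finite Littlestone dimension (Theorem~\ref{thm:minimax}). It applies this to $B_V$ after observing that $\LD(B_V)=\LD(V)<\infty$: swap the two outgoing edges at every tree node $x$ with $\SOA_V(x)=1$. This step needs the Littlestone dimension of $B_V$, not only the VC dimension you used in Step 2.

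Your proposed substitute does not work as sketched. Consider the branch where every finite subgame has value at most $\epsilon/2$. Compactness, or Sion on $\Delta(\overline V)$, gives a measure $\mu$ on the closure $\overline V$ with $\mu(\{g : g(x)\neq \SOA_V(x)\})\le \epsilon/2$ for each $x$. Converting $\mu$ into a finite-support distribution on $V$ that is good \emph{simultaneously} for all $x\in\cX$ is not a technicality; it is the entire content of the infinite minimax theorem.

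Compactness, finite VC dimension, and the fact that $\LD$ does not increase under pointwise limits cannot deliver it. On $\cX=\mathbb{N}$, take $b_i(j)=1[j>i]$ for $i\in\mathbb{N}$, a class of VC dimension $1$. Every finite subgame has value $0$. Your limit is the point mass on the all-zero function, which lies in the closure but not in the class. Yet every finite-support $P$ has $\sup_j\Pr_{b\sim P}[b(j)=1]=1$. Moreover, no finite set $N$ hits every $b_i$, since $b_{\max N}$ vanishes on $N$.

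This ``guess the larger number'' structure is exactly what finite Littlestone dimension excludes. Your sketch never uses finiteness of the Littlestone dimension in a way that rules it out. So either cite the Hanneke--Livni--Moran theorem, as the paper does, or give a proof of the needed direction that genuinely exploits $\LD(V)<\infty$. With that ingredient, your Steps 1--3 coincide with the paper's proof.
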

\end{center}
\end{tcolorbox}

The role of Lemma~\ref{lem:covering-informal} is to reduce the problem of learning $V$ to the problem of learning $k$ many function classes, where $k$ is small, and each class has a strictly smaller dimension than of $V$. After this reduction, an $\epsilon$-good distribution can be searched again for each $V_i$. If there is still some $V_i$ with no $\epsilon$-good distribution, we re-apply Lemma~\ref{lem:covering-informal} for it. Since after each application of Lemma~\ref{lem:covering-informal} the dimension decreases, eventually an $\epsilon$-good distribution must be found, since when the dimension reaches $0$ the problem becomes trivial and a $0$-good distribution must exist. This naturally induces a ``win-win" type of argument: If an $\epsilon$-good distribution exists, we can use it to predict. If not, we can apply Lemma~\ref{lem:covering-informal} and reduce the dimension at the cost of splitting the problem to a not too large number of easier sub-problems.

At a technical level, the upper bound is proved with the help of three classic, yet powerful tools:

\begin{enumerate}
    \item\label{itm:tool-hedge} \emph{The $\Hedge$ proper algorithm} of \cite{freund1997decision}, that aggregates $n$ different learning rules and performs roughly as the best learning rule up to a $\log n$ additive factor.
    \item \label{itm:tool-minimax} \emph{The minimax theorem} \cite{v1928theorie}, extended to infinite Littlestone games by \cite{hanneke2021online}.
    \item \label{itm:tool-net} \emph{The existence of small $\eps$-nets} for any distribution over a VC-class \cite{haussler1986epsilon}.
\end{enumerate}

Each of those tools is formally presented in Section~\ref{sec:upper-bound-prem}. Lemma~\ref{lem:covering-informal} is proved via a simple, yet quite sophisticated combination of Item~\ref{itm:tool-minimax} and Item~\ref{itm:tool-net}. The interested reader may find the details in the proofs of Lemma~\ref{lem:small-net} and Lemma~\ref{lem:cover-dim-drops}.

Item~\ref{itm:tool-hedge} is required as well since Lemma~\ref{lem:covering-informal} splits the version space into $k$ many version spaces, and we do not know which of them is the correct one. We therefore use Item~\ref{itm:tool-hedge} to aggregate the predictions of $k$ experts, where each expert $i$ assumes that $V_i$ is the correct version space. If Lemma~\ref{lem:covering-informal} is applied more than once, then Item~\ref{itm:tool-hedge} is used to aggregate the prediction of more than $k$ experts, but still not too many, since after every application the dimension decreases, and thus after at most $\LD(V)$ many applications the class becomes trivial. It is worth noting that the idea of aggregating predictions of several experts having different guesses on the nature of the problem was proven useful in previous works on online learning \cite{bendavid2009agnostic, long2020new, filmus2024bandit, geneson2024bounds}.

\subsection{Lower bound}
We intuitively explain how a lower bound of $\Omega(\log T)$ is obtained in the case that $T$ is fixed in advance and the Littlestone dimension is $1$. We then explain how the result can be extended to the case that $T$ is not fixed in advance, and for any Littlestone dimension.

We use the class of singletons over $n = \frac{T}{\log T}$ (assuming for simplicity that $n \in \mathbb{N}$). Formally, $\cH_n := \{h_i : i \in [n]\}$ where $h_i(j) = 1[i = j]$ for all $j \in [n]$.

As long as the learner did not see some instance labeled with $1$, every instance not yet labeled with $0$ could be labeled either with $1$ or with $0$. Therefore, the adversary only presents instances labeled with $0$. It can use this strategy for $n-1$ many instances, which is more than enough. Given this behavior of the adversary, there are essentially two reasonable strategies (or some mix of them) for the learner:
\begin{enumerate}
    \item Since every function predicts $0$ on all instances except $1$ and the adversary always labels instances with $0$, it makes sense to always predict by a uniform distribution over $\cH_n$: no matter what the adversary does, the learner is correct with probability $1-1/n$.
    \item Another reasonable strategy is to try to learn the correct target function, by using the uniform distribution only over functions $h_j$ such that $j$ was not yet labeled with $0$.
\end{enumerate}

Let us analyze the total loss suffered by each of those strategies. For the first strategy, the total loss is
\[
\frac{1}{n} T = \frac{\log T}{T} T = \log T.
\]
For the second strategy, the adversary presents the instances $1, \ldots, n-1$ one by one, labeled with $0$. Since the learner removes $h_j$ for any $j$ already labeled with $0$, its loss in round $j \in [n-1]$ is precisely $\frac{1}{n - (j-1)}$. Therefore its total loss is
\[
\sum_{j=1}^{n-1} \frac{1}{n - (j-1)} = \sum_{i=2}^n \frac{1}{i} = \Omega(\log n) = \Omega(\log (T/\log T)) = \Omega(\log T)
\]
by the standard harmonic sum analysis. So, the learner cannot escape from an $\Omega(\log T)$ total loss using those two reasonable strategies. In the formal proof, we show that this is true in general using the same ideas.

To extend the result to the case where $T$ is not fixed in advance, we use a ``gluing" technique (see \cite{chase2024deterministic} for example): we take a union of classes of the form defined above for all (large enough) values of $n$, which only increases the Littlestone dimension from $1$ to $2$. To extend the result for arbitrary values $d$ of Littlestone dimension, one can use the class of $d$-hamming spheres instead of the class of singletons. The proof with $d$-hamming spheres builds on the same ideas and adds no significant complication.

\section{Basic Preliminaries} \label{sec:prem}
\subsection{Concept classes and the Littlestone dimension}
Let $\cX$ be a nonempty set called the \emph{domain}. A \emph{concept}, or \emph{hypothesis}, is a function $h: \cX \to \{0,1\}$. A \emph{concept class} is a subset $\cH \subset \{0,1\}^\cX$.

A \emph{Littlestone tree} of depth $0$ is a single vertex. A Littlestone tree of depth $d \in \mathbb{N}_+$ is a perfect binary tree given as a function $\boldsymbol{x}: \{0,1\}^{< d} \to \cX$. A string $s \in \{0,1\}^{<d}$ indicates a path in the tree starting from the root, where $0$ indicates a left edge and $1$ indicates a right edge. $\boldsymbol{x}(s) \in \cX$ is the instance labeling the vertex the path $s$ leads to. A path $b \in \{0,1\}^{d}$ of length $d$ is called a \emph{branch}.

A concept $h$ realizes a branch $b \in \{0,1\}^{d}$ of $\boldsymbol{x}$ if $h(\boldsymbol{x}(b_{<t})) = b_t$ for all $t\in \{0, \ldots, d-1\}$. Intuitively, this means that the concept $h$ agrees with $\boldsymbol{x}$ on the sequence of labeled examples $(\boldsymbol{x}(b_{<1}), b_1), \ldots, (\boldsymbol{x}(b_{<d}), b_d)$ naturally induced by the path $b$ in the tree $\boldsymbol{x}$. The tree $\boldsymbol{x}$ is \emph{shattered} by $\cH$ if every branch in $\boldsymbol{x}$ has a concept from $\cH$ realizing it. The \emph{Littlestone dimension} of a non-empty class $\cH$, denoted by $\LD(\cH)$ is the maximal $d \in \mathsf{N}$ such that there exists a Littlestone tree of depth $d$ which is shattered by $\cH$. If there is no such maximal $d$, then $\LD(\cH) = \infty$. For an empty class $\cH = \emptyset$, we define $\LD(\cH) = -1$.

\subsection{Online learning}

\subsubsection{Improper learning}

Realizable (improper) online learning is a game that proceeds for $T \in \mathbb{N}_+$ rounds between a learner $\Lrn$ and an adversary $\Adv$, as follows. Before the game begins, a concept class $\cH$ is revealed to both players. $\Adv$ then secretly chooses an \emph{input sequence} $S$ of $T$ many \emph{examples} $S := (x_1, y_1), \ldots, (x_T,y_T)$ that is realizable by $\cH$. That is, there must exist $h \in \cH$ such that $h(x_t) = y_t$ for all $t \in [T]$. Then, the game begins, and in every round $t$:
\begin{enumerate}
    \item $\Lrn$ chooses a classifier $f_t  \in \{0,1\}^{\cX}$.
    \item The example $(x_t,y_t)$ is revealed to $\Lrn$, and $\Lrn$ suffers the \emph{loss} $1[f_t(x_t) \neq y_t]$.
\end{enumerate}
The goal of $\Lrn$ is to minimize its total loss, and $\Adv$'s goal is to maximize it. Consequently, we are interested in the minimax optimal loss bound, denoted as $\M(\cH,T)$, obtained when both players play optimally. Formally, $\Lrn$ is a function $\Lrn:(\cX \times\{0,1\})^\star \to \{0,1\}^{\cX}$, where its input in round $t$ is the sequence of $t-1$ labeled examples observed so far. Let $\cS_T(\cH)$ be the set of all input sequences of length $T$ that are realized by $\cH$. Denote the total loss bound of a learner $\Lrn$ on the input sequence $S$ by $\M(\Lrn, S)$. Then
\[
\M( \cH,T) := \inf_{\Lrn} \sup_{S\in \cS_T(\cH)} \M(\Lrn, S).
\]

The value of $\M( \cH,T)$ is well understood already in the work \cite{littlestone1988learning}.

\begin{theorem} [\cite{littlestone1988learning}] \label{thm:littlestone}
    For any class $\cH$ and time horizon $T \geq 1$, it holds that
    \[
    \M( \cH,T) = \min\{T, \LD(\cH)\}.
    \]
\end{theorem}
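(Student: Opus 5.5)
The statement to prove is Theorem~\ref{thm:littlestone}: $\M(\cH,T)=\min\{T,\LD(\cH)\}$. I will prove the two inequalities separately. The upper bound $\M(\cH,T)\le T$ is trivial, since the loss in each round is at most $1$. For the remaining upper bound $\M(\cH,T)\le\LD(\cH)$, I will use the SOA. At round $t$, let the version space be $V_t=\{h\in\cH: h(x_s)=y_s \text{ for all } s<t\}$, and let the learner output $f_t=\SOA_{V_t}$, that is, $f_t(x)=\argmax_{r}\LD((V_t)_{x\to r})$.

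The key claim is that whenever a mistake occurs at round $t$, we get $\LD(V_{t+1})<\LD(V_t)$. To see this, suppose for contradiction that both $(V_t)_{x_t\to 0}$ and $(V_t)_{x_t\to 1}$ have dimension at least $d=\LD(V_t)$. Take shattered trees of depth $d$ for each of them. Place $x_t$ at a new root, hang the first tree as the left subtree and the second as the right subtree. The resulting tree of depth $d+1$ is shattered by $V_t$, which contradicts $\LD(V_t)=d$. Hence at least one of the two labels yields dimension $<d$. The SOA predicts the label whose restriction has the larger dimension, so if it errs, the true label $y_t$ is the one of strictly smaller dimension, and $V_{t+1}=(V_t)_{x_t\to y_t}$.

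By realizability, $V_t$ is never empty, so its dimension is always at least $0$. The dimension starts at $\LD(\cH)$ and drops by at least one per mistake, so there are at most $\LD(\cH)$ mistakes. If $\LD(\cH)=\infty$, the bound $\le T$ already suffices.

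For the lower bound, set $d'=\min\{T,\LD(\cH)\}$ and fix a shattered tree $\boldsymbol{x}$ of depth $d'$; such a tree exists for every finite depth up to $\LD(\cH)$, including when $\LD(\cH)=\infty$. Against any deterministic learner, the adversary walks down the tree. At each node $s$ it presents $x_t=\boldsymbol{x}(s)$ and labels it $y_t=1-f_t(x_t)$, then moves to the child indicated by $y_t$. This forces a mistake in each of the first $d'$ rounds, and for the remaining $T-d'$ rounds it repeats $x_{d'}$ with the same label, which is still consistent. Since the tree is shattered, the resulting branch is realized by some $h\in\cH$, so the sequence is legal. Because the adversary in the definition chooses $S$ adaptively against the fixed learner $\Lrn$, and the sup is over $S$ for each fixed $\Lrn$, this construction is valid: for each deterministic $\Lrn$ it exhibits an $S$ with $d'$ mistakes.

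There is no real obstacle in this argument. The only delicate step is the tree-gluing argument in the upper bound: the two subtrees must both have depth exactly $d$, which may require truncating deeper trees, and the depth-$0$ base case has to be handled correctly.
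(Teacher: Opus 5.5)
Your proof is correct and is the standard Littlestone argument, which the paper cites without reproving: you use exactly the SOA learner the paper describes after the theorem, the same root-gluing dimension-drop argument the paper uses in Lemma~\ref{lem:cover-dim-drops} for the upper bound, and an adversarial walk down a shattered tree of depth $\min\{T,\LD(\cH)\}$ for the lower bound. One small point: in the tie case the true label's restriction is not ``strictly smaller,'' but both restrictions then have dimension below $\LD(V_t)$, so your conclusion $\LD(V_{t+1})<\LD(V_t)$ still holds.
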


Furthermore, the optimal learner chooses the classifier $f_t$ of round $t$ as follows. Let $\cH^{(t)} \subset \cH$ be the subset of concepts realizing the examples $(x_1, y_1), \ldots, (x_{t-1}, y_{t-1})$. Then for every $x \in \cX$, let
\[
f_t(x) := \argmax_{r \in \{0,1\}} \LD(\cH^{(t)}_{x \to r}),
\]
where $\cH^{(t)}_{x \to r}$ consists of the functions in $\cH^{(t)}$ who classify $x$ to $r$:
\[
\cH_{x \to r} := \{h \in \cH: h(x) = r\}.
\]

\subsubsection{Proper learning}

In contrast with the setting described above, in this paper we study the problem of \emph{proper} online learning. Ideally, proper learning just means one modification to the model: instead of allowing the learner to choose any $f_t \in \{0,1\}^{\cX}$, it must choose $h_t \in \cH$. However, there are easy lower bounds showing that for some classes $\cH$ having $\LD(\cH) < \infty$ and even $\LD(\cH) = 1$, the loss bound in this ideal proper learning model can be as high as $T$ (see \cite{hanneke2021online} for example).

Therefore, it is standard in the proper learning setting to allow the algorithm to choose $h_t \in \cH$ at random,\footnote{A similar standard assumption is traditionally made also in agnostic online learning, and from the same reason. See for example \cite{shalev2014understanding}.} and the suffered loss is the mistake probability. Formally, the adversary (secretly) chooses $(x_1,y_1),\dots,(x_T,y_T)$, and then in every round $t$ of the game:
\begin{enumerate}
    \item $\Lrn$ chooses a distribution $P_t \in \Delta(\cH)$.
    \item The example $(x_t,y_t)$ is revealed to $\Lrn$, and $\Lrn$ suffers the \emph{loss}
    \[
    \Pr_{h_t \sim P_t} 1[h_t(x_t) \neq y_t].
    \]
\end{enumerate}

In the proper learning setting, the learner $\Lrn$ is a function $\Lrn:(\cX \times\{0,1\})^\star \to \Delta(\cH)$, where its input in round $t$ is the sequence of $t-1$ labeled examples observed so far. The notation $\Delta(\cdot)$ refers to the set of all finite support distributions over some set. We consider only finite support distributions since this suffices for our algorithm, and avoids measure theory formalism.

For a class $\cH$ and a time horizon $T$, we denote the minimax optimal total loss by $\M_{\prop}(\cH,T)$, which is defined as $\M(\cH,T)$, with the exception that the infimum is taken over proper randomized learners. We are interested in bounding $\M_{\prop}(\cH,T)$ in terms of $\LD(\cH)$ and $T$.

\section{The upper bound}\label{sec:upper-bound}

\begin{theorem} \label{thm:main-upper-bound}
    For any class $\cH$ with $\LD(\cH) \geq 1$ and any $T \geq 2$ , we have
    \[
    \M_{\prop}(\cH,T) = O(\LD(\cH) \log T).
    \]
\end{theorem}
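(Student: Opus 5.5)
We follow the win-win strategy of Section~\ref{sec:sketch}, with $\epsilon := c\,\LD(\cH)/T$ for a small constant $c$ (or simply $\epsilon = 1/T$). The learner keeps a small family of experts. Each expert is a recursive ``guess'' of a chain $V = V^{(0)} \supset V^{(1)} \supset \cdots$ of subclasses, where $\LD$ strictly decreases along the chain. The experts are aggregated with the proper $\Hedge$ algorithm, so the learner plays a mixture of distributions over $\cH$ and remains proper. Since $\Hedge$ pays only an additive $O(\log N)$ for $N$ experts, it suffices to show two things: some expert suffers total loss $O(\LD(\cH)\log T)$ on the realized sequence, and $\log N = O(\LD(\cH)\log T)$.

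\textbf{Single expert with a fixed current class $V$.} The expert maintains its version space $V^{(t)}$, the functions of $V$ consistent with the data so far. While an $\epsilon$-good distribution $P_t$ over $V^{(t)}$ exists, it predicts with $P_t$. On any round, its loss is at most $1[\SOA_{V^{(t)}}(x_t)\ne y_t] + \epsilon$. The indicator terms sum to at most $\LD(V)$ over all rounds, because every SOA mistake strictly decreases the Littlestone dimension of the version space (Theorem~\ref{thm:littlestone}). The $\epsilon$ terms sum to $\epsilon T = O(1)$.

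\textbf{Branching when no $\epsilon$-good distribution exists.} At the first round where $V^{(t)}$ has no $\epsilon$-good distribution, we invoke Lemma~\ref{lem:covering-informal}. It yields a cover $V_1,\dots,V_k$ of $V^{(t)}$ with $k=\poly(\LD(\cH)\,T)$ and $\LD(V_i)<\LD(V^{(t)})$. The expert then branches into $k$ sub-experts, the $i$-th continuing with current class $V_i$. Because the $V_i$ cover $V^{(t)}$, the target $h^\star$ lies in some $V_i$, and the branch that always follows the part containing $h^\star$ stays realizable.

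Branching can occur at most $\LD(\cH)$ times along a path, since the dimension starts at $\LD(\cH)$ and drops at each branching. Each branching point is also tagged by its round in $[T]$. Hence the number of root-to-leaf expert paths is $N \le (kT)^{\LD(\cH)}$, which gives $\log N = O(\LD(\cH)\log(\LD(\cH)T))$. Because $\LD(\cH) \le T$ may be assumed (otherwise the bound is trivial, as total loss is at most $T$), this is $O(\LD(\cH)\log T)$.

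Alternatively, the experts can be defined lazily: each expert is specified by a sequence of at most $\LD(\cH)$ pairs (round, cover index). This avoids fixing the cover in advance, since the cover depends on the data, but the data are deterministic given the realized history.

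\textbf{Loss of the correct expert.} Along the correct path the dimension-based potential is shared across phases. The SOA-mistake counts across successive phases telescope, because each phase starts at dimension below where the previous one ended. So the total is at most $\LD(\cH)$ SOA mistakes, plus $\epsilon T$ per phase, plus $O(1)$ per switch. Over at most $\LD(\cH)+1$ phases this gives total loss $O(\LD(\cH))$. Adding the $\Hedge$ overhead yields $O(\LD(\cH)\log T)$.

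\textbf{Main obstacle.} The crux is Lemma~\ref{lem:covering-informal}: showing that when no $\epsilon$-good distribution exists, a $\poly(\LD/\epsilon)$-size cover by lower-dimensional subclasses exists. The first attempt would be the minimax theorem. Failure of $\epsilon$-goodness should give, by duality, a distribution $Q$ over instances $x$ such that every $h\in V$ disagrees with $\SOA_V$ on a $Q$-mass of at least $\epsilon$. Next, the disagreement sets $\{x : h(x)\neq \SOA_V(x)\}$ form a class of VC dimension $O(\LD(V))$, so it has an $\epsilon$-net $x_1,\dots,x_k$ for $Q$ of size $O((\LD/\epsilon)\log(1/\epsilon))$. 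Every $h$ then errs against SOA on some $x_i$. This places $h$ in $V_{x_i\to 1-\SOA_V(x_i)}$, whose dimension is strictly less than $\LD(V)$ by the definition of SOA. These classes form the cover.

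A secondary care point is making the minimax step rigorous for infinite classes, using the extension of the minimax theorem by \cite{hanneke2021online}.
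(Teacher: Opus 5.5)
Your proposal is correct and follows essentially the same route as the paper: $\Hedge$ over a fixed, pre-indexed family of $T^{O(\LD(\cH))}$ experts. The paper pre-allocates $T^{3d}$ experts and splits them into equal groups down a tree of depth at most $d$, which is the same as your lazy indexing by cover indices; each expert plays a near-SOA distribution when one exists, and otherwise uses the minimax theorem and an $\epsilon$-net to split its version space into polynomially many (in the paper, $T^3$) subclasses $\{h \in V : h(x) \neq y_V(x)\}$ of strictly smaller Littlestone dimension. Two small repairs are needed: failure of $\epsilon$-goodness only yields $\sup_Q \inf_{h \in V} \Pr_{x\sim Q}[h(x)\neq \SOA_V(x)] \geq \epsilon$ with the supremum possibly not attained, so take $Q$ with mass above $\epsilon/2$ and an $\epsilon/2$-net (the paper uses threshold $2/T$ in the algorithm and a $1/T$-net), and the $\epsilon$-terms should be summed once over all $T$ rounds (giving $\epsilon T$ in total) rather than $\epsilon T$ per phase, which matters if you choose $\epsilon = c\,\LD(\cH)/T$.
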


\subsection{Preliminaries for the proof} \label{sec:upper-bound-prem}

We will use the following version of the minimax theorem for infinite Littlestone games of \cite{hanneke2021online}, which is a special case of their Corollary 7.

\begin{theorem}[\cite{hanneke2021online}] \label{thm:minimax}
    Let $\cH \subset \{0,1\}^\cX$ be a class with $\LD(\cH) < \infty$. Then
    \[
    \inf_{P \in \Delta(\cH)} \sup_{x \in \cX} \Pr_{h \sim P}[h(x) = 1] = \sup_{Q \in \Delta(\cX)} \inf_{h \in \cH}  \Pr_{x \sim Q}[h(x) = 1].
    \]
\end{theorem}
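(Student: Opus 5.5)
We would split the statement into the trivial inequality and the substantive one, and reduce the substantive one to the classical finite minimax theorem of \cite{v1928theorie} by an approximation argument. The only property of $\cH$ used is $\LD(\cH)<\infty$, through the facts that $\VC(\cH)\le \LD(\cH)$ and that the dual VC dimension of $\cH$ is finite, being at most $2^{\VC(\cH)+1}$. This is a plan for a self-contained argument; logically one can also just invoke Corollary~7 of \cite{hanneke2021online}, as the paper does.

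\emph{Weak duality.} For any $P\in\Delta(\cH)$ and $Q\in\Delta(\cX)$,
\[
\sup_x \Pr_{h\sim P}[h(x)=1] \;\ge\; \E_{x\sim Q}\Pr_{h\sim P}[h(x)=1] \;=\; \E_{h\sim P}\Pr_{x\sim Q}[h(x)=1] \;\ge\; \inf_h \Pr_{x\sim Q}[h(x)=1].
\]
Both distributions have finite support, so the order of expectation may be swapped. Hence $\inf\sup\ge\sup\inf$. Let $v$ denote the right-hand side of the statement.

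\emph{Strong duality.} Fix $\epsilon>0$; we want a finitely supported $P$ with $\sup_x\Pr_P[h(x)=1]\le v+O(\epsilon)$. The plan has three steps.
\begin{enumerate}
\item \emph{Restrict to a finite subclass.} For any finite $\cH'\subset\cH$, the points of $\cX$ induce at most $2^{|\cH'|}$ labeling patterns on $\cH'$. So the game $\cH'\times\cX$ is a finite matrix game, and von Neumann gives
\[
\inf_{P\in\Delta(\cH')}\sup_x \Pr_P[h(x)=1] \;=\; \sup_Q \min_{h\in\cH'} \Pr_Q[h(x)=1].
\]
\item \emph{Make the adversary's side finite-dimensional.} Since the dual class has finite VC dimension, every $Q\in\Delta(\cX)$ has an $\epsilon$-approximation by the uniform distribution on a multiset of $m=O(d^*/\epsilon^2)$ points, uniformly over all $h\in\cH$. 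Here $d^*$ is the dual VC dimension, and $m$ does not depend on $Q$. Thus, up to $\epsilon$, the supremum over $Q$ may be taken over uniform distributions on $m$-multisets.
\item \emph{Close the gap.} It remains to find one finite $\cH'$ such that every $m$-multiset $Q$ has some $h\in\cH'$ with $\Pr_Q[h(x)=1]\le v+2\epsilon$. Given such $\cH'$, step 1 supplies the required finitely supported $P$.
\end{enumerate}

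The main obstacle is step 3, a compactness statement. For each individual $Q$ some $h\in\cH$ works, but we need finitely many $h$ that work simultaneously for all $Q$; $\Delta(\cX)$ is not compact and $\cX$ may be arbitrary, so nothing like this comes for free. The first attempt would be symmetric to step 2: also approximate $P$ by uniform distributions on $m$ hypotheses, using $\VC(\cH)<\infty$. This turns the problem into a game between $m$-multisets of $\cH$ and $m$-multisets of $\cX$, in which only the induced $m\times m$ sign patterns matter, and there are finitely many such patterns. One would then run the finite minimax theorem on these pattern types and lift the optimal mixed strategy back to actual hypotheses.

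If that pattern-type reduction fails, because the value depends on more than the patterns, the fallback is the approach of \cite{hanneke2021online}. There $\LD(\cH)<\infty$ is used as an ordinal-type well-foundedness property, replacing compactness to justify exchanging $\inf$ and $\sup$. For our purposes it suffices to cite their Corollary~7.
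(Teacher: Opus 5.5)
The paper gives no argument for this statement: it imports it as a special case of Corollary~7 of \cite{hanneke2021online}, so your fallback is exactly the paper's treatment. Your weak-duality computation and step~1 are also fine. As a self-contained proof, however, your plan has a genuine gap, and the route you suggest for closing it cannot work.

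First, step~3 is not an intermediate lemma; it is equivalent to the theorem itself. Suppose a finitely supported $P$ with $\sup_x \Pr_{h\sim P}[h(x)=1]\le v+\epsilon$ exists. Then its support $\cH'$ satisfies $\min_{h\in\cH'}\Pr_{x\sim Q}[h(x)=1]\le \E_{h\sim P}\Pr_{x\sim Q}[h(x)=1]\le v+\epsilon$ for every $Q$. Conversely, steps~1--2 turn such an $\cH'$ into such a $P$. So steps~1--2 only reduce the statement to an equivalent one, and all of the content sits in step~3.

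Second, and decisively, your remark that $\LD(\cH)<\infty$ is used only through $\VC(\cH)<\infty$ and a finite dual VC dimension is exactly what dooms the plan. Take $\cX=\mathbb{N}$ and $\cH=\{x\mapsto 1[x\ge k]: k\in\mathbb{N}\}$, the ``guess the larger number'' game. Its VC and dual VC dimensions are both $1$, $\epsilon$-approximations exist on both sides, and of course at most $2^{m^2}$ sign patterns occur. Yet every finitely supported $P$ is beaten by a large $x$, so the left-hand side equals $1$. And every finitely supported $Q$ is beaten by a large threshold, so the right-hand side equals $0$. Hence nothing built only from those ingredients can establish step~3.

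Concretely, the pattern-type reduction breaks because a sign pattern is determined jointly by a hypothesis $m$-tuple and a point $m$-tuple. It does not partition either player's strategy set into finitely many types. The game on $\cH^m\times\cX^m$ is therefore still infinite; for $m=1$ it is just the pure game $(h,x)\mapsto h(x)$ behind the statement. In the example above it has no value for any $m$.

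The missing idea is where finite Littlestone dimension enters beyond finite VC dimension. A triangular pattern $h_i(x_j)=1[i\le j]$, $i,j\in[k]$, forces $\LD(\cH)\ge\lfloor\log_2 k\rfloor$ by binary search. So $\LD(\cH)<\infty$ excludes arbitrarily large triangular submatrices, and \cite{hanneke2021online} show that $0/1$ games without such submatrices satisfy the minimax identity. Any self-contained proof has to exploit this.

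A minor point: in steps~2 and~3 you have the dimensions swapped. Approximating $Q\in\Delta(\cX)$ uniformly over $h\in\cH$ is governed by $\VC(\cH)$. Approximating $P\in\Delta(\cH)$ uniformly over $x\in\cX$ is governed by the dual VC dimension.
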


The second tool we will use is the classic \emph{Hedge} framework and algorithm of \cite{freund1997decision}. The hedge framework is defined as follows. The following game is played between $\Lrn$ and $\Adv$ for $T$ many rounds in the presence of $N$ many experts, where each expert $i$ is referred to as $i$. In each round $t$, $\Lrn$ chooses a distribution $P_t \in \Delta([N])$ over the experts. Then, the losses $\ell_{t,i} \in [0,1]$ of all experts $i \in [N]$ are revealed, and $\Lrn$ suffers the loss
\[
\E_{i \sim P_t}[ \ell_{t,i}] = \sum_{i \in [N]} P_t(i) \ell_{t,i}.
\]
The learner's goal is to minimize the total loss suffered in the game. There is no limitation on how the losses are generated. The work of \cite{freund1997decision} proved the following guarantee.

\begin{theorem}[\cite{freund1997decision}] \label{thm:hedge}
    Consider the Hedge framework, with the additional guarantee that there exists $K \in \mathbb{R}$ and an expert $i^\star \in [N]$ such that
    \[
    \sum_{t \in [T]} \ell_{t,i^\star} \leq K.
    \]
    Then, there exists an algorithm (coined as $\Hedge$) with total loss
    \[
    \sum_{t \in [T]} \sum_{i \in [N]} P_t(i) \ell_{t,i} = O(\log N + K).
    \]
\end{theorem}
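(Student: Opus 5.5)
The plan is to prove the statement with the classical multiplicative-weights potential argument, using a fixed learning rate $\eta = 1$. A fixed rate suffices because we only need a bound of the form $O(\log N + K)$, with no sharp constant in front of $K$. The algorithm also need not know $K$.

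$\Hedge$ keeps a weight $w_{t,i} > 0$ for each expert $i \in [N]$. It starts from $w_{1,i} = 1$ and updates multiplicatively by $w_{t+1,i} = w_{t,i} e^{-\eta \ell_{t,i}}$. In round $t$ it plays $P_t(i) = w_{t,i}/W_t$, where $W_t := \sum_{i \in [N]} w_{t,i}$. Write $L_t := \sum_{i} P_t(i)\ell_{t,i}$ for the learner's loss in round $t$. We then track the potential $\ln W_t$ and bound it from both sides.

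\emph{Upper bound on the potential.} Since $z \mapsto e^{-\eta z}$ is convex, for every $\ell \in [0,1]$ we have $e^{-\eta \ell} \leq 1 - (1-e^{-\eta})\ell$. Applying this to each expert gives
\[
\frac{W_{t+1}}{W_t} = \sum_{i \in [N]} P_t(i) e^{-\eta \ell_{t,i}} \leq 1 - (1-e^{-\eta}) L_t \leq \exp\bigl(-(1-e^{-\eta}) L_t\bigr).
\]
Telescoping over $t \in [T]$ and using $W_1 = N$ yields
\[
\ln W_{T+1} \leq \ln N - (1-e^{-\eta}) \sum_{t \in [T]} L_t .
\]

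\emph{Lower bound on the potential.} All weights are positive, so $W_{T+1} \geq w_{T+1,i^\star} = \exp\bigl(-\eta \sum_{t} \ell_{t,i^\star}\bigr) \geq e^{-\eta K}$, using the assumed guarantee on the expert $i^\star$.

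\emph{Combining.} Comparing the two bounds and setting $\eta = 1$ gives
\[
\sum_{t \in [T]} L_t \leq \frac{\ln N + K}{1 - e^{-1}} = O(\log N + K),
\]
which is the claimed bound.

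There is no real obstacle in this argument. The only points needing care are the convexity inequality, which requires the losses to lie in $[0,1]$, and the observation that $\eta$ can be fixed in advance, since the bound only has to hold up to a constant factor in front of $K$.
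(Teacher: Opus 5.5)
Your argument is correct and is essentially the standard Freund--Schapire potential analysis of $\Hedge(\beta)$ with $\beta = e^{-\eta} = e^{-1}$, which gives $\sum_{t} L_t \le (\ln N + K)/(1-e^{-1})$. The paper does not reprove this theorem; it cites exactly this analysis. Your fixed-rate version does not depend on $K$ or $T$ and holds pointwise for adaptively chosen losses in $[0,1]$, which is all the paper's $\ClassHedge$ application (losses in $\{2/T,1\}$) requires.
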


We will also rely on the classic $\epsilon$-net theorem. Below, to avoid measure theory formalism, we consider only finite-support distributions.

\begin{definition}[$\epsilon$-net]
    Let $\cH \subset \{0,1\}^\cX$ be a class, and let $Q$ be some finite-support probability distribution defined over $\cX$. A subset $N \subset \cX$ is an $\epsilon$-net for $\cH, Q$ if for any $h \in \cH$ it holds that
    \[
    \Pr_{x \sim Q} [h(x) = 1] \geq \epsilon \implies \exists x \in N: h(x) = 1.
    \]
\end{definition}

\begin{theorem}[\cite{haussler1986epsilon}] \label{thm:net}
    Let $\cH \subset \{0,1\}^\cX$ be a class with $0 < \LD(\cH) < \infty$. For any finite-support probability distribution $Q$ defined over $\cX$, and for any $\epsilon > 0$ there exists an $\epsilon$-net for $\cH, Q$ of size $O \mleft (\frac{\LD(\cH)}{\epsilon} \log \mleft (\frac{\LD(\cH)}{\epsilon} \mright)\mright)$.
\end{theorem}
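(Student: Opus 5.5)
The plan is to reduce to the classical VC-dimension $\epsilon$-net theorem and then, for completeness, sketch the standard double-sampling argument.

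\emph{Step 1: $\VC(\cH) \leq \LD(\cH)$.} If a set $\{z_1,\dots,z_d\}$ is VC-shattered by $\cH$, build the Littlestone tree of depth $d$ that labels every vertex at depth $i-1$ by $z_i$, so $\boldsymbol{x}(s) = z_{|s|+1}$. A branch $b \in \{0,1\}^d$ asks for a concept $h$ with $h(z_i)=b_i$ for all $i$. Such an $h$ exists by VC-shattering, so the tree is shattered. Hence $d' := \VC(\cH) \le \LD(\cH) =: d < \infty$.

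Since the claimed size bound $\frac{d}{\epsilon}\log\frac{d}{\epsilon}$ is increasing in $d$ once $d/\epsilon \geq e$, it suffices to prove the bound with $d'$ in place of $d$. (If $d'=0$, a single point suffices whenever some $h$ has positive mass, so this case is trivial.) If $\epsilon \ge 1$, the empty set or one point works, so assume $\epsilon<1$, whence $d/\epsilon \ge 1$ and the logarithm is harmless up to constants.

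\emph{Step 2: random sampling (Haussler--Welzl).} Draw $S=(x_1,\dots,x_m)\sim Q^m$ with $m = C\frac{d'}{\epsilon}\log\frac{d'}{\epsilon}$. Let $B$ be the bad event that some $h\in\cH$ with $\Pr_Q[h=1]\ge\epsilon$ has $h\equiv 0$ on $S$. It suffices to show $\Pr[B]<1$, since then some realization of $S$ is an $\epsilon$-net. Because $Q$ has finite support, only the restrictions of $\cH$ to $\mathrm{supp}(Q)$ matter, so everything is finite and no measurability issues arise.

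To bound $\Pr[B]$, draw a ghost sample $S'\sim Q^m$ and let $B'$ be the event that some $h$ is $0$ on $S$ yet $1$ on at least $\epsilon m/2$ points of $S'$. For $m\ge 8/\epsilon$, Chebyshev (or Chernoff) gives $\Pr[B'] \ge \Pr[B]/2$. Then condition on the multiset $S\cup S'$ of $2m$ points and consider a uniformly random split into $S$ and $S'$. By Sauer--Shelah, $\cH$ has at most $(2em/d')^{d'}$ patterns on these $2m$ points. Fix a pattern with at least $\epsilon m/2$ ones. The probability that all of its ones land in $S'$ is at most $2^{-\epsilon m/2}$. A union bound gives
\[
\Pr[B] \le 2\,(2em/d')^{d'}\, 2^{-\epsilon m/2},
\]
which is $<1$ for the stated choice of $m$ with a large enough absolute constant $C$.

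\emph{Main obstacle.} The only delicate point is the final calculation: verifying that $m=C\frac{d'}{\epsilon}\log\frac{d'}{\epsilon}$ satisfies $\frac{\epsilon m}{2} > d'\log_2(2em/d')+1$. Substituting, the left side is $\frac{C}{2}d'\log\frac{d'}{\epsilon}$. The right side is $d'\log\bigl(2eC\log\frac{d'}{\epsilon}/\epsilon\bigr)+1 = d'\cdot O\bigl(\log\frac{1}{\epsilon}+\log C\bigr)+1$, where we use $\log\log\frac{d'}{\epsilon} \le \log\frac{d'}{\epsilon}$. Since $\log\frac{d'}{\epsilon}\ge\log\frac1\epsilon$, this holds once $C$ is a large enough constant (using $d/\epsilon\ge 2$, say, to keep the logarithm bounded away from $0$; the small cases are handled by rounding $m$ up). Everything else is the textbook symmetrization argument, which could alternatively be cited directly together with Step 1.
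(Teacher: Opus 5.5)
Your proposal is correct and takes the same approach as the paper. The paper's justification is just a citation of the Haussler--Welzl $\epsilon$-net theorem for VC classes plus the remark that $\VC(\cH)\le\LD(\cH)$; your Step 1 proves that inequality explicitly, and your Step 2 reproduces the standard double-sampling proof of the cited theorem, while also noting that $\LD(\cH)\ge 1$ already forces $\VC(\cH)\ge 1$, so your $d'=0$ case never arises.
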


Theorem~\ref{thm:net} was originally stated with the VC-dimension instead of the Littlestone dimension, as stated above. However, since the Littlestone dimension upper bounds the VC-dimension, the above formulation of the theorem is correct as well.

\subsection{Proof of the upper bound}
Fix the class $\cH$ and the horizon $T \in \mathbb{N}$, and denote $\LD(\cH) := d < \infty$. Throughout the proof, we may assume that $T = \Omega(d)$. In the complementing case, the upper bound is trivial.  The assumption that $T$ is given and fixed in advance can be removed via a standard exponential doubling trick (see~\cite[Section 4.1]{auer2010ucb} for example).

The proof constructs a small set of experts which are proper learning algorithms for $\cH$ with horizon $T$, such that the assigned loss for each expert in round $t$ is an upper bound on its algorithm's loss on the example $(x_t,y_t)$, and such that the total loss of at least one expert is small. Then, the $\Hedge$ algorithm is being executed over this set of experts. This is done by the $\ClassHedge$ algorithm described in Figure~\ref{fig:main-alg}.

We now present some notation and the algorithm $\ClassHedge$. Let $V \subset \cH$.
Define the function $y_V : \cX \to \{0,1\}$ as
\[
y_V(x) := \argmax_{r \in \{0,1\}} \LD(V_{x \to r})
\]
for all $x\in \cX$.

Algorithm $\ClassHedge$ is instanced by a set $\cE$ of experts. Each expert is a function of a version space $V \subset \cH$ associated with it. The version space associated with $e \in \cE$ is denoted by $V(e)$. For algorithmic reasons, each expert is also associated with a leaf in a tree $\tree$ maintained throughout the execution of $\ClassHedge$. The leaf associated with $e \in \cE$ is denoted by $\ell(e)$. Each leaf $\ell$ in the tree is associated with a version space denoted by $V(\ell)$. For every expert $e$, we always have $V(e) = V(\ell(e))$. That is, an expert is associated with the version space associated with its leaf.  We stress that there could be multiple experts associated with the same leaf. In fact, at first $\tree$ contains only a single node, which is the root $r$. We initialize $V(r) = \cH$, and $\ell(e) = r$ for all $e \in \cE$. Let $\cE(\ell)$ be the set of experts associated with the leaf $\ell$.

In each round $t$, each expert presents a distribution $P_{t,e}$ over $\cH$ (which is supported over $V(e)$). Algorithm $\ClassHedge$ predicts with a distribution $P_t$ over $\cH$, which is a mixture of $\{P_{t,e}\}_{e \in E}$, where the weight of every expert in the mixture is given by the weights produced by $\Hedge$.

\begin{figure}
    \centering
    \begin{tcolorbox}
    \begin{center}
        \textsc{$\ClassHedge$}
    \end{center}
    \textbf{Initialize:} Let the set of experts $\cE$ be of size $T^{3d}$, and set the initial weights of the experts as in $\Hedge$. Initialize a tree $\tree$ with only a root $r$. Set $V(r) := \cH$, and set $\ell(e) = r$ for all $e \in \cE$,
    \\
    \textbf{for $t=1,\ldots, T$:}
    \begin{enumerate}
       \item While there is a leaf $\ell \in \tree$ such that $P_{t,e}$ is not yet defined for $e \in \cE(\ell)$:
       \begin{enumerate}
           \item If $V(\ell) = \emptyset$, set $P_{t,e}$ to be an arbitrary distribution over $\cH$ for all $e \in \cE(\ell)$.
           \item Else, if there is a finite support distribution $P$ over $V(\ell)$ such that
           \[
           \forall x \in \cX: \Pr_{h \sim P}[h(x) \neq y_{V(\ell)}(x)] \leq 2/T,
           \]
           set $P_{t,e} := P$ for all $e \in \cE(\ell)$.
           \item Else, if there is no such distribution, execute $\SplitVersionSpace(\ell)$.        
       \end{enumerate}
       \item Define a distribution $P_t$ over $\cH$ which is given by the distribution over the experts according to their weights as in $\Hedge$. The prediction of an expert $e$ is given by $P_{t,e}$.
       \item Receive current example $(x_t,y_t)$.
       \item For every leaf $\ell \in \tree$:
       \begin{enumerate}
           \item If $V(\ell) = \emptyset$, set $\ell_{t,e} := 1$ for all $e \in \cE(\ell)$.
           \item Else, if $y_t = y_{V(\ell)}(x_t)$: set $\ell_{t,e} := 2/T$ for all $e \in \cE(\ell)$.
           \item Else, if $y_t \neq y_{V(\ell)}(x_t)$: set $\ell_{t,e} := 1$ for all $e \in \cE(\ell)$.
           \item Update $V(\ell)$ to $V(\ell ):= \{h \in V(\ell): h(x_t) = y_t\}$.
       \end{enumerate}
       \item Update experts' weights as in $\Hedge$.
    \end{enumerate}
    \end{tcolorbox}
    \caption{Main algorithm} 
    \label{fig:main-alg}
\end{figure}

\begin{figure}
    \centering
    \begin{tcolorbox}
    \begin{center}
        \textsc{$\SplitVersionSpace$}
    \end{center}
    \textbf{Input:} A leaf $ \ell \in \tree$.
    \\
    \begin{enumerate}
        \item \label{itm:find-net} Find a subset $N \subset \cX$ of size $T^3$ such that for any $h \in V(\ell)$ it holds that there exists $x \in N$ for which $h(x) \neq y_{V(\ell)}(x)$. If such $N$ does not exist, return an error.
        \item \label{itm:split} Split $\cE(\ell)$ to $T^3$ disjoint subsets of equal sizes, each associated with a distinct $x \in N$. If $|\cE(\ell)| \neq T^{3k}$ for some $k \in \mathbb{N_+}$, return an error. Denote the subset of $\cE(\ell)$ associated with $x$ by $\cE_x(\ell)$.
        \item Turn $\ell$ to an internal vertex by attaching to $\ell$ $T^3$ many leaves, each associated with a distinct $x \in N$. Denote the new leaf associated with $x$ by $\ell_x$.
        \item For any $x \in N$ update $\cE(\ell_x) := \cE_x(\ell)$, and $V(\ell_x) := V_x(\ell)$, where:
        \[
        V_x(\ell) := \{h \in V(\ell): h(x) \neq y_{V(\ell)}(x)\}.
        \]
    \end{enumerate}
    \end{tcolorbox}
    \caption{Leaf splitting algorithm} 
    \label{fig:covering-alg}
\end{figure}

We now prove that $\ClassHedge$ indeed has a total loss bound of $O(d \log T)$, by a series of arguments.

\begin{lemma}\label{lem:covering}
    At all times, for every $h\in \cH$ which is consistent with the input, there exists $\ell \in \tree$ such that $h \in V(\ell)$.
\end{lemma}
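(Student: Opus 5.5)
We prove Lemma~\ref{lem:covering} by induction over the sequence of operations that $\ClassHedge$ performs on the tree $\tree$. The invariant is stronger and more local: at every moment, for every $h \in \cH$ consistent with all examples revealed so far, there is a \emph{leaf} $\ell$ of $\tree$ with $h \in V(\ell)$. Since every leaf is a vertex of $\tree$, this implies the statement. Only two kinds of operations change $\tree$ or the version spaces: the version-space update in step 4(d) of each round, and calls to $\SplitVersionSpace$ in step 1(c). Every other step leaves both the tree and the sets $V(\cdot)$ untouched.

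The base case is the initialization. There $\tree$ consists only of the root $r$, and $V(r)=\cH$ contains every $h\in\cH$. For the update in step 4(d), suppose that before receiving $(x_t,y_t)$ the function $h$ is consistent with the first $t-1$ examples and lies in $V(\ell)$ for some leaf $\ell$. If $h$ is also consistent with $(x_t,y_t)$, then $h(x_t)=y_t$, so $h$ survives the filtering $V(\ell):=\{h'\in V(\ell): h'(x_t)=y_t\}$. Functions that become inconsistent need not be covered afterward, so the invariant is preserved.

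For a call $\SplitVersionSpace(\ell)$, only functions $h\in V(\ell)$ could lose coverage, because all other leaves are unchanged. We assume the call returns without error, which is implicit in the algorithm being well defined; the absence of errors is presumably established by a separate lemma combining Theorem~\ref{thm:minimax} and Theorem~\ref{thm:net}. Then step~\ref{itm:find-net} provides a set $N$ such that every $h\in V(\ell)$ has some $x\in N$ with $h(x)\neq y_{V(\ell)}(x)$. For such an $x$, the definition of $V_x(\ell)$ gives $h\in V_x(\ell)=V(\ell_x)$, and $\ell_x$ is a new leaf of $\tree$. Hence every function covered by $\ell$ before the split is covered by one of its children afterward. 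Together with the base case and the update step, this completes the induction.

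The main subtlety is conceptual rather than technical. The covering property of the split depends on $N$ satisfying its defining property with respect to the \emph{current} version space $V(\ell)$ at the moment of splitting, and on the split succeeding at all. We take successful execution of $\SplitVersionSpace$ as given here, since the lemma concerns the algorithm's states when it runs, and verify only that each successful split preserves the invariant.
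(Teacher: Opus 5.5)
Your proof is correct and takes the same route as the paper. The paper reduces the lemma to the covering property of each split, $V(\ell) \subset \bigcup_{x \in N} V_x(\ell)$, which is exactly your split step, and it dismisses the initialization and the filtering update as ``not difficult to see'' where you spell them out. Your strengthening of the invariant to \emph{leaves} is also a useful precision: internal vertices keep stale version spaces, and Lemma~\ref{lem:good-experts} needs a leaf that carries an expert.
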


\begin{proof}
    Since we initialize $V(r) = \cH$, it is not difficult to see that it suffices to show that for any $\ell \in \tree$, if $N$ is the subset used in $\SplitVersionSpace(\ell)$ to split $\ell$, then
    \[
    V(\ell) \subset \bigcup_{x \in N} V_x(\ell).
    \]
    Let $h\in V(\ell)$. By definition of $N$, there exists $x \in N$ such that $h(x) \neq y_{V(\ell)}(x)$, that is, such that $h \in V_x(\ell)$.
\end{proof}

\begin{lemma}\label{lem:small-net}
    For any $\ell$ for which $\SplitVersionSpace$ is executed for, the set $N$ defined in $\SplitVersionSpace(\ell)$ exists. That is, Item~\ref{itm:find-net} of $\SplitVersionSpace$ never returns an error.
\end{lemma}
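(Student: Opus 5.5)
Let $V := V(\ell)$ be the version space on which $\SplitVersionSpace(\ell)$ is called. We know $V \neq \emptyset$ and that no finite-support distribution $P$ over $V$ satisfies $\Pr_{h\sim P}[h(x)\neq y_V(x)]\le 2/T$ for all $x$. The plan has three steps: turn this into a statement about a "disagreement class", apply the minimax theorem (Theorem~\ref{thm:minimax}) to obtain a hard distribution $Q$ over $\cX$, and take an $\epsilon$-net for $Q$ via Theorem~\ref{thm:net}.

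\textbf{Step 1 (disagreement class).} Define $D := \{g_h : h \in V\}$, where $g_h(x) := 1[h(x)\neq y_V(x)]$. Each $g_h$ is the XOR of $h$ with the fixed function $y_V$. XOR with a fixed labeling maps shattered Littlestone trees to shattered Littlestone trees (relabel the edges at each node), so $\LD(D)=\LD(V)\le d<\infty$. The hypothesis of the lemma then reads
\[
\inf_{P\in\Delta(D)}\sup_{x\in\cX}\Pr_{g\sim P}[g(x)=1] > 2/T .
\]

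\textbf{Step 2 (minimax).} Theorem~\ref{thm:minimax} applied to $D$ converts the display above into
\[
\sup_{Q\in\Delta(\cX)}\inf_{g\in D}\Pr_{x\sim Q}[g(x)=1] > 2/T .
\]
So there is a finite-support $Q$ with $\Pr_{x\sim Q}[h(x)\neq y_V(x)]\ge 2/T$ for every $h\in V$. Strictness of the inequality gives some slack, and it suffices to have the bound with $\epsilon = 1/T$.

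\textbf{Step 3 ($\epsilon$-net).} Apply Theorem~\ref{thm:net} to $D$, $Q$ and $\epsilon = 1/T$. This gives a set $N$ such that every $g\in D$ with $Q$-mass at least $1/T$ is $1$ somewhere on $N$. By Step 2 this holds for every $g \in D$, so every $h\in V$ disagrees with $y_V$ at some $x\in N$. The size of $N$ is
\[
O\bigl(dT\log(dT)\bigr)\le T^3
\]
for $T$ large relative to $d$, which we may assume since $T=\Omega(d)$. If $N$ is smaller than $T^3$, pad it with arbitrary points; the covering property is preserved. One should also check that padding does not break the later dimension-drop argument, but that concerns Lemma~\ref{lem:cover-dim-drops}, not this lemma.

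\textbf{Edge case.} Theorem~\ref{thm:net} requires $\LD(D)>0$. If $\LD(V)=0$, then $V$ is a single function on every point, or more precisely all $h\in V$ agree with $y_V$ everywhere. In that case the point mass on any $h\in V$ is $0$-good, so $\SplitVersionSpace$ is never called.

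The main obstacle I expect is verifying that the minimax theorem applies to $D$ exactly as stated, including the finiteness of $\LD(D)$ and the convention that $\Delta(\cdot)$ consists of finite-support distributions, so that the $Q$ produced is finite-support as Theorem~\ref{thm:net} requires. The dimension-preservation under XOR with $y_V$ is the key observation that makes this work.
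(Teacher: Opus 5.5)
Your proposal is correct and follows the paper's proof step for step: XOR with $y_{V(\ell)}$ to get the disagreement class of equal Littlestone dimension, the minimax theorem to obtain a finite-support $Q$, and then a $(1/T)$-net of size at most $T^3$, with your padding playing the role of the paper's $|\cX|\ge T^3$/multiset assumption. One small correction: the hypothesis only gives $\inf_{P}\sup_{x}\Pr_{g\sim P}[g(x)=1]\ge 2/T$, not $>2/T$, since the infimum need not be attained. This does not affect your argument, because with $\epsilon=1/T$ you still obtain a $Q$ under which every $g$ has mass $>1/T$, and this is exactly the slack the paper uses.
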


\begin{proof}
    We assume for simplicity that $|\cX| \geq T^3$ (otherwise the subset $N$ can be a multiset, which does not hurt the algorithm).
    Since $\SplitVersionSpace(\ell)$ is executed, it is implied that $V(\ell)$ is non-empty and that there is no finite support distribution $P$ over $V(\ell)$ such that
    \[
        \forall x \in \cX: \Pr_{h \sim P}[h(x) \neq y_{V(\ell)}(x)] \leq 2/T,
    \]
    That is, for any finite support distribution $P$ over $V(\ell)$ there exists $x \in \cX$ for which
    \[
        \Pr_{h \sim P}[h(x) \neq y_{V(\ell)}(x)] > 2/T.
    \]
    For any $h \in \cH$, define the function $b_h$ for all $x \in \cX$ as
    \[
    b_h(x) := 1[h(x) \neq y_{V(\ell)}(x)],
    \]
    and so we can write that for any finite support distribution $P$ over $V(\ell)$ there exists $x \in \cX$ for which
    \begin{equation} \label{eq:minimax-condition}
        \Pr_{h \sim P}[b_h(x) = 1] > 2/T.
    \end{equation}
    Furthermore, the class $B_{V(\ell)}: = \{b_h: h \in V(\ell)\}$ satisfies $\LD(B_{V(\ell)}) = \LD(V(\ell))$. Indeed, note that for any $x \in \cX, h\in V(\ell)$ we have $b_h(x) = h(x) \oplus y_{V(\ell)}(x)$. That is, for $x$ with $y_{V(\ell)}(x) = 1$ we have $b_h(x) = 1 - h(x)$, and for $x$ with $y_{V(\ell)}(x) = 0$ we have $b_h(x) = h(x)$. Thus, any Littlestone tree for $V(\ell)$ can serve as a Littlestone tree for  $B_{V(\ell)}$ by switching the labels on the left and right outgoing edges of every node labeled with $x \in \cX$ for which $y_{V(\ell)}(x) = 1$. Therefore, the minimax Theorem~\ref{thm:minimax} holds for $B_{V(\ell)}$, and thus \eqref{eq:minimax-condition} implies that there exists a finite support distribution $Q$ over $\cX$ such that for any $b_h \in B_{V(\ell)}$ we have
    \begin{equation}\label{eq:net-condition}
        \Pr_{x \sim Q} [b_h(x) = 1] > 1/T.
    \end{equation}
    Now, Theorem~\ref{thm:net} implies that there exists a $(1/T)$-net $N$ for $B_{V(\ell)}, Q$ of size $T^3$. Together with \eqref{eq:net-condition}, this implies that  for any $b_h \in B_{V(\ell)}$ there exists $x \in N$ such that   $b_h(x) = 1$, that is, for any $h \in V(\ell)$ there exists $x \in N$ such that $h(x) \neq y_{V(\ell)}(x)$, as required.
\end{proof}

\begin{lemma} \label{lem:cover-dim-drops}
    Let $\ell \in \tree$, and let $N$ as defined in $\SplitVersionSpace(\ell)$. Then for every $x \in N$ we have
    \[
    \LD(V_x(\ell)) < \LD(V(\ell)).
    \]
\end{lemma}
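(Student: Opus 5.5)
Since $V_x(\ell) = V(\ell)_{x \to 1 - y_{V(\ell)}(x)}$, the claim reduces to the standard fact behind the SOA mistake bound: for any nonempty class $V$ and any $x \in \cX$, the minority restriction $V_{x \to 1-y_V(x)}$ has Littlestone dimension strictly smaller than $\LD(V)$. I will prove exactly this with $V := V(\ell)$. Two facts from the surrounding context are used. First, $V(\ell)$ is nonempty when $\SplitVersionSpace(\ell)$ is called, since the empty case is handled before. Second, $d(V(\ell)) := \LD(V(\ell)) \le \LD(\cH) < \infty$, because Littlestone dimension is monotone under taking subclasses and $\cH$ has finite dimension.

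Write $y := y_V(x)$ and $\bar y := 1-y$. Suppose, toward a contradiction, that $\LD(V_{x \to \bar y}) \ge \LD(V) =: d$. By the definition of $y_V$ as an argmax, $\LD(V_{x\to y}) \ge \LD(V_{x \to \bar y}) \ge d$. Hence both restrictions $V_{x \to 0}$ and $V_{x \to 1}$ shatter Littlestone trees of depth $d$, say $\boldsymbol{x}_0$ and $\boldsymbol{x}_1$.

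Now build a tree of depth $d+1$. Its root is labeled by $x$, its left subtree is $\boldsymbol{x}_0$, and its right subtree is $\boldsymbol{x}_1$. Take any branch $b \in \{0,1\}^{d+1}$. Its first bit $b_1$ selects the subtree $\boldsymbol{x}_{b_1}$, and the remaining bits form a branch of that subtree. That branch is realized by some $h \in V_{x \to b_1}$, and this $h$ also satisfies $h(x) = b_1$, so $h \in V$ realizes all of $b$. Therefore $V$ shatters a tree of depth $d+1$, contradicting $\LD(V) = d$. The edge cases fit the same argument. If $d = 0$, the trees $\boldsymbol{x}_0, \boldsymbol{x}_1$ are single vertices and the combined tree has depth $1$. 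If one restriction is empty, its dimension is $-1 < d$ by convention, so the claim holds trivially. The argmax tie-breaking does not matter, because the argument only uses that the chosen label has dimension at least that of the other one.

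The only real subtlety is making sure finiteness of $\LD(V(\ell))$ holds, so that a strict inequality is meaningful. This is ensured by $\LD(\cH) = d < \infty$, which is assumed throughout the proof of the upper bound. The rest is the routine tree-gluing step above.
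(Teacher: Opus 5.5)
Your proof is correct and takes the same route as the paper's: assume $\LD(V_x(\ell)) \ge \LD(V(\ell))$, use the argmax definition of $y_{V(\ell)}$ to conclude that both $V(\ell)_{x\to 0}$ and $V(\ell)_{x\to 1}$ have dimension at least $\LD(V(\ell))$, and hang the two shattered subtrees under a root labeled $x$ to get a shattered tree of depth $\LD(V(\ell))+1$, a contradiction. You also spell out details the paper leaves implicit: the subtree gluing itself, the nonemptiness of $V(\ell)$ when $\SplitVersionSpace$ is called, and the finiteness of $\LD(V(\ell))$.
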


\begin{proof}
    Suppose that for some $x \in N$ we have $\LD(V_x(\ell)) \geq \LD(V(\ell))$. Then we can construct a Littlestone tree of depth $\LD(V(\ell)) + 1$ which is shattered by $V(\ell)$ (which is of course a contradiction), as follows: Let the root be labeled with $x$. By assumption and by definition of $V_x(\ell)$, we have
    \[
    \LD(V(\ell)_{x \to y_{V(\ell)}(x)})
    \geq
    \LD(V(\ell)_{x \to 1-y_{V(\ell)}(x)})
    =
    \LD(V_x(\ell))
    \geq
    \LD(V(\ell)),
    \]
    which concludes the proof. Therefore, $\LD(V_x(\ell)) < \LD(V(\ell))$ for all $x \in N$.
\end{proof}

\begin{lemma}\label{lem:experts-number}
    Whenever $\SplitVersionSpace(\ell)$ is executed, there exists $k \in \mathbb{N}_+$, such that $|\cE(\ell)| = T^{3k}$. That is, Item~\ref{itm:split} of $\SplitVersionSpace$ does not return an error.
\end{lemma}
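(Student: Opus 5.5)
We argue by an invariant relating the number of experts at a leaf to its depth in $\tree$. Concretely, we show by induction over the execution of $\ClassHedge$ that every leaf $\ell$ of depth $j$ (the root has depth $0$) satisfies $|\cE(\ell)| = T^{3(d-j)}$, and in addition $\LD(V(\ell)) \leq d - j$ whenever $V(\ell) \neq \emptyset$.

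For the base case, the tree initially consists only of the root $r$. It has depth $0$, carries $|\cE(r)| = T^{3d}$ experts, and satisfies $\LD(V(r)) = \LD(\cH) = d$.

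For the inductive step there are two kinds of events to check.
\begin{enumerate}
\item Updates in Item 4(d) only shrink $V(\ell)$. Since the Littlestone dimension is monotone under taking subsets, the dimension bound is preserved, and the expert sets are untouched.
\item Suppose $\SplitVersionSpace(\ell)$ is executed on a leaf $\ell$ of depth $j$ with $|\cE(\ell)| = T^{3(d-j)}$. By the invariant, that is already a power of $T^3$, so the only issue is that the exponent $d-j$ is at least $1$. The set $N$ exists by Lemma~\ref{lem:small-net}, so Item~\ref{itm:split} splits $T^{3(d-j)}$ experts evenly into $T^3$ groups of size $T^{3(d-j-1)}$. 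The new leaves $\ell_x$ have depth $j+1$, so they satisfy the expert-count part of the invariant. By Lemma~\ref{lem:cover-dim-drops}, $\LD(V_x(\ell)) < \LD(V(\ell)) \leq d-j$, which gives $\LD(V(\ell_x)) \leq d-(j+1)$ (or $V(\ell_x)=\emptyset$).
\end{enumerate}

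The main point, and the only real obstacle, is to show $j \leq d-1$ whenever a split occurs, i.e.\ $k = d-j \geq 1$. A split is executed only when $V(\ell)\neq\emptyset$ and no $2/T$-good distribution over $V(\ell)$ exists. Suppose instead $j \geq d$. The invariant then gives $\LD(V(\ell)) \leq 0$, so $\LD(V(\ell)) = 0$. We claim this already yields a $0$-good distribution.

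If $V(\ell)$ contains two distinct functions, they differ on some $x$. That $x$ is a depth-$1$ tree shattered by $V(\ell)$, contradicting $\LD(V(\ell)) = 0$. Hence $V(\ell) = \{h\}$ is a singleton. Then $y_{V(\ell)}(x) = h(x)$ for every $x$, because $\LD(V_{x\to h(x)}) = 0 > -1 = \LD(V_{x \to 1-h(x)})$. So the point mass on $h$ satisfies the condition of step 1(b) with error $0 \leq 2/T$, and no split is executed.

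This contradiction shows $j \leq d-1$ whenever a split occurs. Therefore $|\cE(\ell)| = T^{3k}$ with $k = d-j \in \mathbb{N}_+$, as required.
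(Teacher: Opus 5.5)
Your proof is correct and follows the paper's own argument. Each split lowers the Littlestone dimension (Lemma~\ref{lem:cover-dim-drops}) and divides the expert count by $T^3$. So a leaf at depth $d$ has a version space of dimension at most $0$, which is empty or a singleton and never triggers $\SplitVersionSpace$. Your explicit invariant ($|\cE(\ell)| = T^{3(d-j)}$ and $\LD(V(\ell)) \le d-j$) makes rigorous what the paper states informally, including stability under the version-space updates of Item 4(d) and the empty-version-space case.
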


\begin{proof}
    We first have only the root $r$, and thus $|\cE(r)| = T^{3d}$. Each execution of $\SplitVersionSpace(\ell)$ splits $\cE(\ell)$ to $T^3$ disjoint subsets. Thus, it suffices to show that if $|\cE(\ell)| = 1$, then $\SplitVersionSpace$ will not be called with $\ell$.
    
    In other words, it suffices to show that $\tree$ never reaches depth $d+1$, in any of its branches. Indeed, Lemma~\ref{lem:cover-dim-drops} establishes that for any $\ell_x$ attached to $\ell$ when $\SplitVersionSpace(\ell)$ is called, we have $\LD(V(\ell_x)) < \LD(V(\ell))$. Therefore, if for some $\ell \in \tree$ we have $|\cE(\ell)| = 1$, it is implied that $\ell$ is at depth $d$, which due to Lemma~\ref{lem:cover-dim-drops} implies that $\LD(V(\ell)) = 0$. We can now observe that the condition that fires the execution of $\SplitVersionSpace(\ell)$ cannot be satisfied for $\ell$ with $\LD(V(\ell)) = 0$, since $\LD(V(\ell)) = 0$ implies $|V(\ell)| = 1$. 
\end{proof}

\begin{lemma} \label{lem:good-experts}
    There exists $e \in \cE$ such that
    \[
    \sum_{t \in [T]} \ell_{t, e} \leq \LD(\cH) + 2. 
    \]
\end{lemma}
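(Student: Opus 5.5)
We follow the target $h^\star$ down the tree: at every time there will be a leaf containing $h^\star$, and we pick an expert attached to that leaf and show it pays loss $1$ only when the Littlestone dimension of its leaf's version space drops.

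\textbf{Potential.} Fix a target $h^\star\in\cH$ consistent with the whole input sequence, which exists by realizability. Track the pair (leaf containing $h^\star$, $\LD$ of that leaf's version space) over time. The potential will be $\LD(V(\ell))$ for the leaf holding $h^\star$. It starts at $\LD(\cH)=d$ and, since $h^\star\in V(\ell)$ keeps the version space nonempty, it is always at least $0$.

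\textbf{Choosing the expert.} We pick $e$ by descending $\tree$ consistently with $h^\star$. Whenever $\SplitVersionSpace(\ell)$ is called on a leaf $\ell$ with $h^\star\in V(\ell)$, the covering property in the proof of Lemma~\ref{lem:covering} gives some $x\in N$ with $h^\star\in V_x(\ell)$. We follow that child $\ell_x$.

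We need to choose $e$ in advance, but this is not a problem. Since the adversary's sequence is fixed beforehand and the algorithm's tree evolution is deterministic given the sequence, the whole path of leaves visited by $h^\star$ is determined. Define $e$ as any expert in the final leaf of this path. Such an expert exists because the splits partition the expert sets and Lemma~\ref{lem:experts-number} ensures no leaf is split when it has a single expert. Then $\ell(e)$ is at all times the leaf on this path, so $h^\star\in V(e)$ throughout. In particular $V(e)\neq\emptyset$, so the loss rule (a) never applies to $e$.

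\textbf{Charging the losses.} In each round, $e$ gets loss $2/T$ or $1$. The $2/T$ rounds contribute at most $T\cdot 2/T=2$ in total. Loss $1$ occurs exactly when $y_t\neq y_{V(\ell)}(x_t)$, where $\ell=\ell(e)$ at that round. Since $y_t=h^\star(x_t)$ and $y_{V}$ is the argmax of $\LD(V_{x\to r})$, the update makes $V(\ell)$ equal to $V(\ell)_{x_t\to 1-y_V(x_t)}$. This set has $\LD$ strictly smaller than $\LD(V(\ell))$, by the same tree-extension argument as in Lemma~\ref{lem:cover-dim-drops}: if both sides had dimension at least $\LD(V)$, we could root a shattered tree of depth $\LD(V)+1$ at $x_t$.

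Updates by examples never increase $\LD$, and splits strictly decrease it by Lemma~\ref{lem:cover-dim-drops}. Hence the potential is nonincreasing and drops by at least $1$ at every loss-$1$ round. Since it starts at $d$ and stays at least $0$, there are at most $d$ such rounds. The total is therefore at most $d+2$.

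\textbf{Main obstacle.} The subtle step is the well-definedness of the fixed expert $e$ that follows $h^\star$, chosen consistently with the tree built online. Fixing the input sequence in advance resolves it.
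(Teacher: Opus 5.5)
Your proof is correct and follows the paper's argument. You fix a consistent target, take an expert whose leaf always contains it, bound the $2/T$-rounds by $2$ in total, and charge each unit-loss round to a drop of $\LD(V(\ell(e)))$, of which there are at most $d$; your forward path construction and explicit potential (with splits only shrinking the version space) simply spell out what the paper's terse proof leaves implicit. The ``choose $e$ in advance'' obstacle is moot, since Theorem~\ref{thm:hedge} only needs such an expert to exist in hindsight.
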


\begin{proof}
    Let $h \in \cH$ such that $h$ is consistent with the input sequence. Let $\ell \in \tree$ such that $h \in V(\ell)$ in round $T$, which exists by Lemma~\ref{lem:covering}. By Lemma~\ref{lem:experts-number}, we also have $V(\ell) \neq \emptyset$, so let $e \in \cE(\ell)$. In every round $t$, there are two possibilities for $\ell_{t,e}$. If $y_t = y_{V(e)}(x_t)$, then $\ell_{t,e} = 2/T$. Thus the total loss from such rounds is at most $\frac{2}{T} T = 2$. In the other case, we have $y_t \neq y_{V(e)}(x_t)$ and then $\ell_{t,e} = 1$. By definition of $y_{V(e)}$, the Littlestone dimension of the version space of $e$ will decrease by at least $1$ after $\ell(e)$ updates its version space to be consistent with the example $(x_t,y_t)$. Therefore, this case can happen at most $d$ many times, and the proof is completed.  
\end{proof}

We may now prove the following lemma that implies Theorem~\ref{thm:main-upper-bound}.

\begin{lemma} \label{lem:upper-bound-main}
    Algorithm $\ClassHedge$ is a proper online learner such that for any $\cH$ with $\LD(\cH) \geq 1$ and for any $T \geq 2$, we have $\M(\ClassHedge, S) = O(\LD(\cH) \log T)$ for all $S \in \cS_T(\cH)$.
\end{lemma}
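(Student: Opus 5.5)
The plan has three parts. First, check properness. Second, show that each expert's assigned loss upper bounds the true loss of its distribution. Third, invoke Theorem~\ref{thm:hedge} with Lemma~\ref{lem:good-experts}.

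\emph{Properness.} In every round each $P_{t,e}$ is a finite-support distribution over $\cH$. Such a distribution is always produced, because $\SplitVersionSpace$ never errs (Lemma~\ref{lem:small-net} and Lemma~\ref{lem:experts-number}). Each split strictly lowers the Littlestone dimension of the new leaves (Lemma~\ref{lem:cover-dim-drops}). Hence the while loop in step 1 terminates: every leaf eventually either is empty or has dimension $0$, and a single hypothesis gives a $0$-good distribution. The Hedge weights give a distribution over the finite set $\cE$. Therefore $P_t=\sum_e w_t(e)P_{t,e}$ is a finite-support distribution over $\cH$, so $\ClassHedge$ is proper.

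\emph{Assigned losses dominate true losses.} The learner's loss in round $t$ is $\Pr_{h\sim P_t}[h(x_t)\neq y_t]=\sum_e w_t(e)\Pr_{h\sim P_{t,e}}[h(x_t)\neq y_t]$. It therefore suffices to check, for every expert $e$ with leaf $\ell=\ell(e)$, that $\Pr_{h\sim P_{t,e}}[h(x_t)\neq y_t]\le \ell_{t,e}$.
\begin{itemize}
\item If $V(\ell)=\emptyset$ or $y_t\neq y_{V(\ell)}(x_t)$, then $\ell_{t,e}=1$, and the inequality is trivial.
\item Otherwise $y_t=y_{V(\ell)}(x_t)$. At the end of step 1 the leaf $\ell$ has $P_{t,e}$ chosen in case (b), so $P_{t,e}$ is $2/T$-good for $V(\ell)$. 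The mistake probability at $x_t$ is then at most $2/T=\ell_{t,e}$.
\end{itemize}
One point needs care: the leaf in step 4 must be the same leaf, with the same $V(\ell)$, as the one used in step 1. No splits occur between these steps, so this holds.

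\emph{Applying Hedge.} All losses $\ell_{t,e}$ lie in $[0,1]$. By Lemma~\ref{lem:good-experts} some expert has cumulative loss at most $K=\LD(\cH)+2$. Theorem~\ref{thm:hedge} with $N=T^{3d}$ bounds the Hedge loss $\sum_t\sum_e w_t(e)\ell_{t,e}$ by $O(3d\log T+d+2)=O(d\log T)$, using $d\ge1$ and $T\ge2$. Since this quantity dominates the learner's true total loss by the previous step, $\M(\ClassHedge,S)=O(\LD(\cH)\log T)$ for every $S\in\cS_T(\cH)$.

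The main obstacle I expect is the mismatch between Hedge's fixed expert set and experts whose predictions depend on dynamically split leaves. Hedge's guarantee holds for arbitrary, even adaptively generated, losses, so the experts' dependence on the history is harmless. What remains to verify is that Lemma~\ref{lem:good-experts} indeed applies to an expert in $\cE$, which holds because $\cE$ is fixed from initialization and only its leaf assignment changes.
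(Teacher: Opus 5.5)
Your proposal is correct and follows the paper's proof essentially step for step. It argues properness via the Hedge-weighted mixture $P_t$, proves the per-expert domination $\Pr_{h\sim P_{t,e}}[h(x_t)\neq y_t]\le \ell_{t,e}$ by the same case split, and applies Theorem~\ref{thm:hedge} with $N=T^{3d}$ and $K=d+2$ from Lemma~\ref{lem:good-experts}; your extra checks (that the while loop terminates via Lemma~\ref{lem:cover-dim-drops}, and that the leaf scored in step 4 is the one whose distribution was fixed in step 1) are sound details the paper leaves implicit.
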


\begin{proof}
    First, $\ClassHedge$ is a proper learner since in every round it defines a distribution $P_t$ over proper learners, and thus proper itself. We now prove its loss guarantee.

    Now, note that for any round $t$ and for any $e \in \cE$ we have
    \begin{equation}\label{eq:loss-lower-bound}
       \sum_{h \in \cH} P_{t,e}(h) 1[h(x_t) \neq y_t] \leq \ell_{t,e}.
    \end{equation}
    Indeed, for the cases that $V(e) = \emptyset$ or $y_t \neq y_{V(e)}(x_t)$ in round $t$ this is trivial. For the case $y_t = y_{V(e)}(x_t)$, the definition of $P_{t,e}$ implies that $\Pr_{h \sim P_{t,e}} [h(x_t) \neq y_t] \leq 2/T = \ell_{t,e}$.
    Let $E_{t}$ be the distribution over experts that $\ClassHedge$ uses in round $t$.
    Now, by definition of $\ClassHedge$, its loss on $S$ is
    \begin{align*}
        M(\ClassHedge, S) &= \sum_t \sum_{h \in \cH} P_t(h)1 [h(x_t) \neq y_t] \\
                          &= \sum_t \sum_{e \in \cE} E_t(e)  \sum_{h \in \cH} P_{t,e}(h) 1[h(x_t) \neq y_t] \\
                          &\leq \sum_t \sum_{e \in \cE} E_t(e) \ell_{t,e},
    \end{align*}
    where the final inequality is by \eqref{eq:loss-lower-bound}. Now, $\ClassHedge$ is the $\Hedge$ algorithm executed on $T^{3d}$ experts, with the guarantee that there exists an expert with total loss at most $d+2$, due to Lemma~\ref{lem:good-experts}. Theorem~\ref{thm:hedge} implies that the quantity $\sum_t \sum_{e \in \cE} E_t(e) \ell_{t,e}$ is bounded as
    \[
    \sum_t \sum_{e \in \cE} E_t(e) \ell_{t,e} = O( \log T^{3d} + d) = O(d \log T).
    \]
    This concludes the proof.
\end{proof}

\section{The Lower bound} \label{sec:lower-bound}
We prove the following lower bound.

\begin{theorem} \label{thm:lower-bound}
    For any $d \in \mathbb{N_+}$, there exists a class $\cH_d$ such that $\LD(\cH_d) = O(d)$, and for any $T \geq d^2$:
    \[
    \M_{\prop}(\cH_d,T) = \Omega(d \log T).
    \]
\end{theorem}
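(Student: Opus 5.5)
Following the sketch in Section~\ref{sec:sketch}, I would first prove the case $d=1$ with $T$ fixed, then remove the dependence on $T$ by gluing, and finally lift to general $d$.

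\textbf{Fixed $T$, $d=1$.} Take the singletons $\cH_n=\{h_i:i\in[n]\}$ on domain $[n]$ with $n=\lfloor T/\log T\rfloor$. The adversary picks a uniformly random permutation $\pi$ of $[n]$ and a uniformly random target $i^\star$. It presents only $0$-labeled examples at points $j\neq i^\star$, so the sequence is realizable by $h_{i^\star}$. The presentation schedule is: for each $m=1,\dots,n-1$, show the point $\pi(m)$ repeatedly for a block of $r=\lfloor T/n\rfloor\approx\log T$ rounds.

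Fix any randomized proper learner. The loss on an example $(x,0)$ is the mass $P_t(h_x)$. During block $m$, condition on the history. The point $\pi(m)$ is uniformly distributed among the $n-m+1$ points not yet revealed (including $i^\star$), so at the first round of the block the expected loss is
\[
\frac{1-\sum_{\text{revealed } j}P_t(h_j)}{n-m+1}.
\]
The learner therefore faces a dichotomy. If it keeps mass on already refuted singletons $h_j$, the adversary can re-present those points, and the learner pays that mass in every round. If it moves the mass onto unrevealed singletons, it pays about $1/(n-m+1)$ in each block. Summing the second option over blocks gives the harmonic bound $\sum_{k\ge 2}1/k=\Omega(\log n)=\Omega(\log T)$.

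To make this rigorous I would allow the adversary, in each round, to either present a fresh point or re-present a refuted point, whichever costs more. I would then show by a potential argument that the total loss is at least
\[
\sum_m \min\Bigl(\tfrac{1}{n-m+1},\ \text{mass on refuted points}\Bigr)\cdot(\text{block length}),
\]
or alternatively apply Yao's principle with a suitably mixed adversary. \textbf{This is the main obstacle}: the learner may mix the two strategies, for instance by keeping $\Theta(1/n)$ mass spread uniformly. I expect a clean bound of the form "each round's expected loss is at least $c\cdot\max(\text{stale mass},\ 1/\#\text{remaining})$" to work, because the stale mass can be charged at every round and there are $T\ge n\log T$ rounds.

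\textbf{General $d$.} Replace singletons with $d$ disjoint copies of the construction (a product, or disjoint union of domains, with $h=(h_{i_1},\dots,h_{i_d})$). The Littlestone dimension adds across copies, so it is $d$. The adversary plays the $d=1$ strategy independently in each copy with horizon $T/d$, and linearity of expectation gives $d\cdot\Omega(\log(T/d))$. The condition $T\ge d^2$ guarantees $\log(T/d)\ge \tfrac12\log T$. Alternatively one could use Hamming spheres as the sketch suggests, but the product construction is simpler.

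\textbf{Unknown $T$.} Glue the classes $\cH^{(n)}$ for $n=2^k$ on disjoint domains, with an added indicator coordinate selecting $k$. This raises the Littlestone dimension by only $O(1)$ per copy: one extra level to identify which block is active. Given $T$, the adversary first reveals the points pinning the active block $k\approx\log(T/\log T)$, which costs $O(1)$ rounds per copy, and then runs the fixed-$T$ strategy inside that block.
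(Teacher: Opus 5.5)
Your $d=1$ argument carries the whole theorem, but you leave it open, and the inequality you propose to close it is false. Write $U$ for the unrefuted points. Replaying a refuted point $x_j$ charges only $P_t(h_j)$, not the whole stale mass. Consider the learner that always plays the uniform distribution over all $n$ singletons. It pays exactly $1/n$ on every $0$-labeled example, hence $T/n\approx\log T$ in total. Yet after $n/2$ refutations its stale mass is $1/2$, so ``loss $\ge c\max(\text{stale mass},1/|U|)$'' fails, and your displayed block bound would credit it with order $\log T\cdot\log n$.

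The missing idea is that you never need to charge the full stale mass. Because $n=T/\log T$, it suffices to extract $\Omega(1/n)$ in every non-refuting round. Concretely:
\begin{itemize}
\item if $\sum_{j\notin U}P_t(h_j)\ge 1/2$, replay the refuted point of largest mass, which costs at least $1/(2n)$ and leaves $U$ unchanged;
\item otherwise, present the point of $U$ of largest mass with label $0$, which costs at least $1/(2|U|)$, and delete it from $U$.
\end{itemize}
Either $U$ shrinks down to the target, and the refuting rounds alone give $\sum_{m=2}^{n}\frac{1}{2m}=\Omega(\log n)$. Or fewer than $n$ rounds were refuting, so at least $T-n\ge T/2$ rounds each cost at least $1/(2n)$, for a total of $\Omega(T/n)=\Omega(\log T)$. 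This is the paper's Case~2a/2b dichotomy and counting.

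Your gluing step has a second gap of the same kind. A proper learner may put mass on hypotheses outside the version space. Every hypothesis of another block labels all points of the active block $0$, so it never errs on your $(x_{n,i},0)$ examples. Revealing the indicator once therefore buys nothing: the learner moves its mass out of the block and pays zero from then on. The adversary must keep the option of replaying $(I_n,1)$ in every round in which the out-of-block mass is at least $1/2$; that round costs at least $1/2$ and is counted together with the replay rounds. The in-block dichotomy must then be run with in-block mass only $>1/2$, i.e.\ with thresholds $1/4$. This is the paper's Case~1.

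Once both repairs are made, your product route to general $d$ is a legitimate alternative to the paper's Hamming spheres. The repaired adversary responds round by round to an arbitrary $P_t$. So running it sequentially in $d$ glued copies, with horizon $T/d$ each, is unaffected by correlations the learner creates across copies, and $T\ge d^2$ gives $\log(T/d)\ge\frac12\log T$. The paper instead gets the factor $d$ inside a single block from the identity $\sum_i p_i=d\,p_g$. This avoids splitting the horizon and yields $\LD(\cH_d)=d+1$ rather than $2d$.
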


Let us first define for every $d$ the class $\cH_d$ used in Theorem~\ref{thm:lower-bound}. The class is defined as a union of $d$-hamming spheres over a block $[n]$, for all block size $n \geq 2d$. The block of size $n$ is referred to as block $n$. We need varying block sizes to handle all possible horizons $T$ in a single class. This is similar to the gluing technique used e.g.\ in \cite{chase2024deterministic}.

Formally, for every block size $n \geq 2d$ define
\[
\cH_{d,n} := \{h_{n,S} : S \subset [n], |S| = d \},
\]
and $\cH_d$ is the union
\[
\cH_d := \bigcup_{n \geq 2d} \cH_{d,n}.
\]
The domain $\cH_d$ is defined over is
\[
\cX_d := \{I_n : n \geq 2d\} \cup \{x_{n,i} : n \geq 2d, i \in [n]\}.
\]
The role of $I_n$ is to indicate the block, and for every $n \geq 2d$ the instances $x_{n,1}, \ldots, x_{n,n}$ form the block of size $n$. We denote the set of instances used in block $n$ as
\[
\cX_{d,n} = \{I_n\} \cup \{x_{n,i} : i \in [n]\},
\]
such that $\cX_d = \bigcup_{n \geq 2d} \cX_{d,n}$.

Let us formally define $\cH_{d,n}$. For all $n \geq 2d$ and $S \subset [n]$ define $h_{n,S}(I_m) := 1[m = n]$ for all $m \geq 2d$, and $h_{n,S}(x_{m,i}) = 1[m=n \text{ and } i\in S]$. It is not difficult to verify the following

\begin{observation} \label{obs:hd-ldim}
    We have $\LD(\cH_d) = d + 1$.
\end{observation}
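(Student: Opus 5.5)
The plan is to prove the two inequalities $\LD(\cH_d) \geq d+1$ and $\LD(\cH_d) \leq d+1$ separately.

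\emph{Lower bound.} We exhibit a shattered tree of depth $d+1$. Fix the block $n = 2d$ and put $I_{2d}$ at the root.
\begin{itemize}
\item The $0$-subtree of the root may be arbitrary. Any $h_{m,S}$ with $m \neq 2d$ labels $I_{2d}$ by $0$. Also $h_{m,S}(x_{2d,i}) = 0$ for all $i$, so any depth-$d$ tree on block-$2d$ instances has all its all-zero branch realized by such an $h$. Moreover, $\cH_{d,m}$ for $m\ge 2d+1$ can realize more: restricted to block $m$ it is the class of $d$-subsets of $[m]$.
\item For the $1$-subtree, it is simplest to note that the class $\{S \subset [n] : |S| = d\}$ with $n \geq 2d$ has Littlestone dimension exactly $d$. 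To see $\geq d$, query distinct fresh points $x_{n,i_1},\dots,x_{n,i_d}$ along each branch. Any branch labels at most $d$ points with $1$ and at most $d$ with $0$, and $n \geq 2d$ leaves room to complete the chosen $1$-points to a $d$-set avoiding the $0$-points. (Points seen with a $1$ label need not be requeried; always querying unused indices keeps everything consistent.)
\end{itemize}
So under the $1$-edge of $I_{2d}$ we place this depth-$d$ tree on block $2d$; every branch is realized by some $h_{2d,S}$, which labels $I_{2d}$ by $1$. Under the $0$-edge we place, for instance, the same kind of tree on block $2d+1$, realized by $h_{2d+1,S}$, which labels $I_{2d}$ by $0$. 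This yields depth $d+1$.

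\emph{Upper bound.} Take any shattered tree and follow the branch that answers $1$ on every $I_m$ query and $1$ on every $x_{m,i}$ query, as long as this remains realizable; we need to show this is forced to stop within $d+1$ steps. Equivalently, give an online learner making at most $d+1$ mistakes, which suffices by Theorem~\ref{thm:littlestone}. The learner predicts $0$ everywhere until it errs. A mistake on some $I_n$ or on some $x_{n,i}$ labeled $1$ reveals the block $n$; this costs at most $1$ mistake. From then on the learner predicts $0$ off block $n$ and $1$ on $I_n$, and these predictions are always correct. Inside block $n$ it predicts $0$ on unlabeled points, and each further mistake reveals a new element of $S$, so there are at most $d$ more mistakes in total. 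One subtlety is that the first mistake might be on $x_{n,i}$ rather than $I_n$, but that is also a single mistake which both reveals $n$ and reveals one element of $S$. Hence there are at most $1+d$ mistakes, so $\LD(\cH_d)\le d+1$.

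The main obstacle is verifying the realizability bookkeeping in the lower-bound tree, which is where $n \geq 2d$ is used. The mistake-counting argument for the upper bound is routine.
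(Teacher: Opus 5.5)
Your proof is correct. The paper does not prove this observation at all; it only says it is ``not difficult to verify''. So there is no competing argument to compare against, and your two halves supply exactly the omitted verification. The lower bound is a depth-$(d+1)$ tree rooted at $I_{2d}$, with its $1$-subtree on block $2d$ and its $0$-subtree on block $2d+1$. The upper bound is a $(d+1)$-mistake learner combined with Theorem~\ref{thm:littlestone}.

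There are two small points. First, the sentence ``the $0$-subtree of the root may be arbitrary'' is false as written. The $0$-subtree must be shattered by the functions with $h(I_{2d})=0$. Your final choice, a fresh-point tree on block $2d+1$, does this, so only the wording needs fixing.

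Second, the upper-bound route you began and then abandoned works directly and is shorter than the learner. In a shattered tree no instance can repeat along a root-to-leaf path, since otherwise some branch would require contradictory labels. Every $h\in\cH_d$ takes the value $1$ on exactly $d+1$ points, namely $I_n$ and $x_{n,i}$ for $i\in S$. The all-ones branch must be realized by some such $h$, so the depth is at most $d+1$.
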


We may now prove the lower bound. In the proof, it is convenient to use an adaptive adversary $\Adv$ that is allowed to choose the input sequence on the fly. However, adaptive and oblivious adversaries are in fact equivalent since the distribution $P_t$ used by the learner in every round $t$ is a deterministic function of the input sequence up to round $t-1$.

In fact, using a somewhat more involved argument, one can prove that the same lower bound holds even if the adversary is \emph{stochastic}, which is even weaker than an adversarial oblivious adversary.
%Proving tight upper bounds against this weaker adversary remains open. More details can be found in Section~\ref{sec:future}.

\begin{proof}[Proof of Theorem~\ref{thm:lower-bound}]
    Fix $d \in \mathbb{N}_+$ and $T \geq d^2$. The proof requires that $T$ is larger than some universal constant $C >0$. We therefore assume that $d \geq \sqrt{C}$. This does not invalidate the proof for $d < \sqrt{C}$, as for such values of $d$ we use the class $\cH_d$ with $d = \ceil*{\sqrt{C}}$. We first fix the block size as
    \[
    n := \frac{T}{\log T},
    \]
    assuming for simplicity that $n \in \mathbb{N}$. We have $n \geq 2d$ for sufficiently large $C$, thus a block of size $n$ exists.

    $\Adv$ uses only instances from $\cX_{d,n}$ in the input sequence. It maintains a set $U_t \subset[n]$ of instances $\{x_{n,i} : i \in U_t\}$ that were not yet labeled with $0$ in the beginning of round $t$. Crucially, as long as $|U_t| > d$, $\Adv$ is allowed to use any coordinate instance $x_{n,i}$ labeled with $0$ as an example, while keeping the input sequence realizable. Initially $U_1 = [n]$ and we assume that $|U_t|  \geq d$ at all times. To make this assumption hold, once $|U_t|$ is exactly $d$, $\Adv$ produces arbitrary examples that are consistent with $h_{n,U_t}$, and we assume that the learner suffers no further loss. We also denote the set of indices $i$ such that the example $(x_{n,i},0)$ already appeared in the input sequence up to round $t-1$ as $U_t^c := [n] \backslash U_t$.
    %We denote the version space matching to $U_t$ as
    %\[
    %V(U_t) := \{h_{n,S} \in \cH_{d,n}: S \subset U_t\}.
    %\]
    %Likewise, let $V^c(U_t) := \cH_{d,n} \backslash V(U_t)$.
    
    Fix the round $t$, and let $P$ be the distribution used by $\Lrn$ in round $t$. Let $p_{b} := P(\cH_d \backslash \cH_{d,n})$ be the total mass of $P$ outside of the block's functions $\cH_{d,n}$. We have two cases. In the first case, which we call case 1, we have $p_{b} \geq 1/2$. In this case, $\Adv$ sets the example of round $t$ to $(I_n,1)$, and the learner suffers loss at least
    \begin{equation} \label{eq:loss-bad}
        p_b \geq 1/2,
    \end{equation}
    without elimination of any element of $U_t$. Thus, in this case we set $U_{t+1} := U_t$. In the complementing case, which we call case 2, the total mass of the functions in $\cH_{d,n}$, denoted as $p_g:= P(\cH_{d,n})$, is more than $1/2$. For any $i \in [n]$, let $p_i$ be the total mass of functions giving $1$ to $x_{n,i}$:
    \[
    p_i := P(\{h_{n,S} : S \subset [n], i \in S\}).
    \]
    By definition of $p_i$ and the assumption $p_g > 1/2$ we have
    \[
    \sum_{i=1}^n p_i = d p_g > d/2.
    \]
    Indeed, every $h \in \cH_{d,n}$ appears in the set $\{h_{n,S} : S \subset [n], i \in S\}$ for precisely $d$ many coordinates $i$, thus $\sum_{i=1}^n p_i$ in fact sums the total mass of $\cH_{d,n}$ for precisely $d$ many times.

    We now handle two different cases that case 2 splits to. In the first case, which we call case 2a, we have $\sum_{i \in U_t^c} p_i \geq d/4$. When this is the case, $\Adv$ identifies $i \in U_t^c$ who maximizes $p_i$ and sets the example in round $t$ to $(x_{n,i}, 0)$. This example does not eliminate an element from $U_t$ and thus we set $U_{t+1} := U_t$. By definition of $p_i$ the learner suffers loss at least
    \begin{equation}\label{eq:loss-out-of-vs}
        p_i \geq \frac{d}{4 |U_t^c|} \geq \frac{d}{4 n}.
    \end{equation}
    In the complementing case, which we call case 2b, we have $\sum_{i \in U_t} p_i \geq d/4$. $\Adv$ identifies $i \in U_t$ who maximizes $p_i$ and sets the example in round $t$ to $(x_{n,i}, 0)$. By definition of $p_i$ the learner suffers loss at least
    \begin{equation}\label{eq:loss-in-vs}
        p_i \geq \frac{d}{4 |U_t|}.
    \end{equation}
    In this case, the index $i$ should be eliminated from $U_t$, so we set $U_{t+1} := U_t \backslash \{i\}$.

    We now lower bound the total loss suffered by the learner.
    %Let $T' \leq T$ be the maximal round in which $|U_{T'}| > d$. For every $t$, we have $|U_{t+1}| \geq |U_t| - 1$, which implies that $T' \geq n-1$. From now on, we discuss only the rounds $1, \ldots T'$. Let us first suppose that at least half of those rounds are case $1$. Then the total loss suffered by the learner is at least $\frac{1}{2}\frac{1}{2} (n-1) \gg \frac{1}{4}d \log T$. Therefore, suppose that more than half of the rounds are case 2.
    We consider two cases. In the first case, there exists $T'$ such that $|U_{T'}| = d$. This means that case 2b happened for all possible sizes of $U_t$: $m \in \{n, \ldots, d+1\}$. The total loss originating only from case 2b rounds is thus at least
    \[
    \sum_{m=d+1}^n \frac{d}{4 m} \geq \frac{1}{8} d \log \frac{n}{d} = \frac{1}{8} d \log \frac{T}{d \log T} = \Omega(d \log T),
    \]
    due to \eqref{eq:loss-in-vs}, where the final equality holds since $d\geq \sqrt{C}$ where $C$ is sufficiently large.

    In the complementing case, $|U_T| > d$, which means that at most $n$ many rounds were case 2b, and that at least $T -n \geq T/2$ rounds were case 1 or case 2a. In both cases, the loss per round is at least $\frac{d}{4n}$ due to \eqref{eq:loss-bad}, \eqref{eq:loss-out-of-vs}. Overall, this sums up to total loss of at least
    \[
    \frac{T}{2} \frac{d}{4n} = \frac{T}{2} \frac{d\log T}{4T} = \Omega(d \log T),
    \]
    as stated.
\end{proof}

\section*{Acknowledgments}
Idan Mehalel is supported by the European Research Council (ERC) under the European Union’s Horizon 2022 research and innovation program (grant agreement No. 101041711), the Israel Science Foundation (grant number 2258/19), and the Simons Foundation (as part of the Collaboration on the Mathematical and Scientific Foundations of Deep Learning).

%\section*{Acknowledgments}

\bibliographystyle{alphaurl}
\bibliography{bib.bib}

\end{document}

\subsection{The stochastic setup}
We prove the lower bound in an easier stochastic setting, and therefore it holds also for the adversarial problem discussed above. There is in fact an even easier proof that holds only for the adversarial setting, but it is of interest to show that the logarithmic dependence on $T$ is unavoidable for some Littlestone classes even in this easier setup. We define the exact setup below.

A domain $\cX$ (which can be any set) and a function class $\cH \subset \{0,1\}^{\cX}$ are given. We consider a game played between $\Lrn$ and $\Adv$ for $T \in \mathbb{N}$ many rounds. Before the game begins, $\Adv$ chooses a secret distribution $D$ over $\cX$ and a secret target function $h^\star \in \cF$. In every round $t$, $\Lrn$ chooses $h_t \in \cH$ and simultaneously $\Adv$ draws $x_t \sim D$. $\Adv$ then send $(x_t, h^\star(x_t))$ to $\Lrn$, and $\Lrn$ suffers the $0/1$-loss $1[h_t(x_t) \neq h^\star(x_t)]$. $\Adv$ knows the algorithm used by $\Lrn$, but does not know the the internal randomness used to by $\Lrn$. Let $M := M( \cH,T)$ be the minimax optimal mistake bound (that is, when $\Lrn, \Adv$ play optimally), when $\cH$ is the class and  $T$ is the number of rounds.

Another possible variant is to allow $\Adv$ to draw  $x_t \sim D$, but decide on any $y_t \in \{0,1\}$ it wishes, and then send $(x_t, y_t)$ to the learner. In this case, $\Adv$ must make sure that the version space of $\cH$ restricted to the pairs $(x_t, y_t)$ sent so far is non-empty. 

\subsection{The stochastic lower bound}

\begin{theorem}
    For any $T$ there exists a class $\cH$ with $\LD(\cH) = 1$, while $M(\cH,T) = \Omega( \log T)$.
\end{theorem}
The theorem holds already for the case that $\Adv$ commits on $h^\star$ in advance.
The theorem is formulated as having a different class for each $T$. To handle all $T$ with one class, we use gluing of classes, a standard trick (see \cite{chase2024deterministic}).

We provide a proof sketch of the theorem below.

The class is just singletons over $\cX := \{x_1, \ldots, x_n\}$ where $n = T/\log T$ (assume for simplicity of notation that $n$ is natural).

$\Adv$'s strategy proceeds as follows. It draws $i \in [T]$ uniformly at random, and then chooses the target to be $h_i$, which is the singleton $x_i$, and the distribution to be $D_i$, which is uniform over $\cX \backslash \{x_i\}$.

\noindent
\underline{Assumption:}
Here is one part I do informally. since $\Lrn$ is only  a function of revealed information (input and feedback), then if points $x_i,x_j$ both do not appear in the input, then to play optimally, $\Lrn$'s posterior probability that $D_i$ and $D_j$ are used to draw instances is identical. This is not really an assumption, and can be formalized as a fact.

Let us now analyze the expected loss in each round $t$, denoted as $\mathbb{E} [\ell_t]$. Let $U_t$ be the set of instances not yet drawn at round $t$ (or their size, when inside an equation). Then, by assumption:
\[
\E[\ell_t] \geq \mleft( 1 - 1/U_t \mright) \frac{1}{n}.
\]
(In fact, it should be $n-1$, but it doesn't matter). Explanation: By assumption, in $\Lrn$'s posterior distribution the index $i$ of the target singleton $h_i$ can be any of $U_t$ with equal probabilities, so $\Lrn$ chooses $h_j \neq h_i$ with probability at least $1-1/U_t$. Assuming it picked $h_j \neq h_i$, it errs in the case that $x_j$ is drawn, which happens w.p. $1/n$.

Now, note that since $U_0 = n$, we have $U_t \geq 3$ for all $t \leq \frac{n \log n}{100}$. This is just coupon collector. So, subtituing in the former bound, for any such $t $ we have
\[
\E [\ell_t] \geq \frac{2}{3} \frac{1}{n}.
\]

Summing over all such $t$, we obtain a total expected loss of

\[
\frac{n \log n}{100} \frac{2}{3} \frac{1}{n} = \Omega(\log n) = \Omega\mleft ( \log \mleft(\frac{T}{\log T} \mright) \mright) = \Omega(\log T).
\]

\section{Proper online learning setup}
\idan{remove after}
A domain $\cX$ (which can be any set) and a function class $\cH \subset \{0,1\}^{\cX}$ are given. We consider a game played between $\Lrn$ and $\Adv$ for $T \in \mathbb{N}$ many rounds. In the beginning of the game, and knowing $\Lrn$ (but not the seeds generation its randomness), $\Adv$ chooses an input sequence\footnote{The algorithm works also against an adaptive algorithm, but the formalism is simpler for an oblivious adversary who chooses the entire input sequence in advance.} $(x_1,y_1), \ldots, (x_T,y_T) \in \cX \times \{0,1\}$ of labeled examples that is realizable by $\cH$. In every round $t$, $\Lrn$ chooses a distribution $P_t \in \Delta(\cH)$ and then presented with $(x_t,y_t)$, and suffers the loss
\[
\Pr_{h_t \sim P_t}[h_t(x_t) \neq y_t] = \sum_{h\in \cH} P_t(h) 1[h(x_t) \neq y_t],
\]
which is the probability that $\Lrn$ errs in round $t$. The notation $\Delta(\cdot)$ refers to the set of all finite support distributions over some set. We consider only finite support distributions since this suffices for our algorithm, and avoids measure theory formalism.
Let $\M_{\prop}( \cH,T)$ be the minimax optimal total loss bound (that is, when both $\Lrn, \Adv$ play optimally), when $\cH$ is the class and  $T$ is the number of rounds.

Formally, $\Lrn$ is a function $\Lrn:(\cX \times\{0,1\})^\star \to \Delta(\cH)$, where its input in round $t$ is the sequence of $t-1$ labeled examples observed so far. Let $\cS_T(\cH)$ be the set of all input sequences of length $T$ that are realized by $\cH$. Denote the total loss bound of a learner $\Lrn$ on the input sequence $S$ by $\M_{\prop}(\Lrn, S)$. Then
\[
\M_{\prop}( \cH,T) := \inf_{\Lrn} \sup_{S\in \cS_T(\cH)} \M_{\prop}(\Lrn, S).
\]

For a class $\cH$, $\LD(\cH)$ denotes its Littlestone dimension.

Fix the class $\cH$ and the time horizon $T$. Our algorithm maintains a set of experts $\cE$, where each expert $E$ is associated with a versions space $V(E)$. At first, we have just one expert $E$ with $V(E) = \cH$. As the game evolves, each expert may split itself to multiple experts associated with different version spaces, but we maintain the property that in every round $t$, the actual version space $V^{(t)} \subset \cH$ of functions consistent with the previous examples $(x_1, y_1), \ldots, (x_{t-1},y_{t-1})$ is always covered by the union of expert version spaces:
\begin{equation}\label{eq:V-covered}
   V^{(t)}  \subset \bigcup_{E \in \cE} V(E).
\end{equation}

In each round, each expert produces a proper prediction rule, which is a function of previous examples and the version space associated with it. Our main algorithm aggregates those different learning rules into a new learning rule using the $\Hedge$ algorithm of \cite{freund1997decision}. Since all experts produce a proper learning rule and $\Hedge$ is also proper, the resulting learning rule is itself proper. A fundamental result of \cite{freund1997decision} shows that if the set of experts is of size $n$ and there exists an expert whose total loss is at most $M$, then the total loss of the learning rule produced by $\Hedge$ is  $O(M + \log n)$. Therefore, to prove the bound $ \M_{\prop}(\cH,T) = O(\LD \log T)$, it suffices to prove that:
 
\begin{enumerate}
    \item \label{itm:small-expert-loss} There exists an expert $E \in \cE$ whose total loss is $O(\LD(\cH))$.
    \item \label{itm:small-expert-set} Item~\ref{itm:small-expert-loss} is possible when the size of $\cE$ at round $T$ is $|\cE| = T^{O(\LD(\cH))}$.
\end{enumerate}

We now describe how to establish these two claims.
Let $V$ be the version space of some fixed expert $E$, in some round $t$. Assuming that \eqref{eq:V-covered} holds, there is some $E \in \cE$ with the ``correct" version space, that is, such that $h^\star \in V(E)$, where $h^\star \in \cH$ is the target function that is consistent with the entire input sequence $(x_1, y_1), \ldots, (x_{T},y_{T})$. To prove Item~\ref{itm:small-expert-loss} we only need one expert to have small loss, and thus we let every expert act under the belief as its own version space is the correct version space. Under this assumption, we make sure that each expert achieves small total loss. For at least one expert, this assumption will be correct and so its total loss will indeed be small. It is worth noting that the above general idea of using experts that operate based on different guesses about the real nature of the problem was proven useful in previous works on online learning \cite{long2020new, filmus2024bandit, geneson2024bounds}.

We now explain how each expert can achieve small loss under the assumption described above, without using too many experts. The main argument used to prove this has a ``win-win" nature: if an expert can obtain guaranteed small loss, it does so. If it cannot, then we show that it can split itself to a small number of experts with version spaces maintaining \eqref{eq:V-covered}, such that each of them can achieve total small loss.

As mentioned in the beginning of the section, our benchmark is the SOA algorithm, so each expert tries to approximate it. Concretely, let $f: \cX \to \{0,1\}$ be the optimal predictor that would have been chosen by the improper SOA algorithm for the version space $V$, if improper learning was allowed. Recall that, from the standard analysis of SOA, we have
\begin{equation} \label{eq:soa-predictor}
    \forall x\in \cX: f(x) = \argmax_{r \in \{0,1\}} \LD(V_{x \to r}),
\end{equation}
where $V_{x \to r} := \{h \in V: h(x) = r\}$. We check if there exists a distribution $P$ over $V$ such that for all
\begin{equation} \label{eq:win1}
    \sup_{x \in \cX} \Pr_{h \sim P}[h(x) \neq f(x)] \leq 1/T.
\end{equation}
If \eqref{eq:win1} holds in all rounds, then we are done: the loss from the small probability events accumulates to at most $1$ over $T$ many rounds, and the loss suffered by the high probability events, in which the drawn predictor agrees with $f$ accumulates to at most $\LD(\cH)$, by the classic analysis of SOA \cite{littlestone1988learning}, under the assumption that $V$ is indeed the correct version space.

The more challenging part is to handle the case where \eqref{eq:win1} does not hold. When this is the case, we split $E$ to experts associated with different version spaces that maintain \eqref{eq:V-covered}. So, $E$ predicts only in rounds where \eqref{eq:win1} holds, which is at most $T$ rounds, and so its total loss is at most  $\LD(\cH) + 1$, if its version space is always the correct one. This establishes Item~\eqref{itm:small-expert-loss}, assuming that \eqref{eq:V-covered} is always maintained. It remains to show that the total number of experts created does not exceed $T^{O(\LD(\cH))}$, while \eqref{eq:V-covered} is always maintained.

Assuming that \eqref{eq:win1} does not holds, for any $P \in \Delta(V)$ there exists $x\in \cX$ for which $f$ cannot be well-approximated, that is, for which
\[
\Pr_{h \sim P_t}[h(x) \neq f(x)] > 1/T.
\]
The minimax theorem (as extended for infinite Littlestone games by \cite{hanneke2021online}) now implies that there exists a distribution $Q$ over $\cX$ such that for every $h \in \cH$ we have
\begin{equation} \label{eq:win2}
    \inf_{h \in \cH} \Pr_{x \sim Q} [h(x) \neq f(x)] > 1/T.
\end{equation}

A well-known result by \cite{haussler1986epsilon} establishes that for any class $\cF \subset \{0,1\}^{\cX}$ with $\LD(\cH) < \infty$, any distribution $Q$ over $\cX$, and any small enough $\eps \in (0,1)$, there exists an $\eps$-net of size at most $1/\eps^3$. Using this result with the distribution $Q$ and (a slight variation of) the class $\cH$ implies that there exists a $\frac{1}{T}$-net $N \subset \cX$ such for any $h \in \cH$, there exists $x \in N$ for which $h(x) \neq f(x)$. Recall that the analysis of SOA relies on the crucial fact that for any $x \in \cX$:
\begin{equation} \label{eq:net-dim-decrease}
    \LD(V_{x \to 1-f(x)}) < \LD(V).
\end{equation}

Indeed, if this is not the case, then a contradiction to the definition of $\LD(V)$ can be reached. This allows us to cover $V$ by a small cover based on $N$, of at most $T^3$ many subsets of $V$, such that each subset has a smaller Littlestone dimension. Indeed, for every $x\in N$ let
\[
V_x := \{h \in V: h(x) \neq f(x)\}.
\]
As we have mentioned, for every $h \in \cH$, there exists $x \in N$ for which $h(x) \neq f(x)$, that is, for which $h \in V_x$. This implies that indeed
\[
 V \subset \bigcup_{x \in N}  V_x.
\]
So, we split the expert $E$ to at most $T^3$ experts, each associated with a different $V_x$, and so we maintain \eqref{eq:V-covered}.
Furthermore, \eqref{eq:net-dim-decrease} implies that for all $x \in N$, $\LD(V_x) < \LD( V)$. Note that if $\LD(V) = 0$, then \eqref{eq:win1} trivially holds, and no further such split-and-decrease-dimension via a $\frac{1}{T}$-net steps are required.
Now, since in the first round we have $V(E) = \cH$ for the single expert we start with, and every split of any expert $E$ is to at most $T^3$ many experts, a simple calculation shows that at round $T$, we have $|\cE| \leq T^{3\LD(\cH)}$. This establishes Item~\ref{itm:small-expert-set}.